%% file: main.tex
\pdfoutput=1

\documentclass{article}

\PassOptionsToPackage{table}{xcolor}

\usepackage[preprint]{neurips_2026}

\usepackage[utf8]{inputenc}
\usepackage[T1]{fontenc}
\usepackage[hyphens]{url}
\usepackage{booktabs}
\usepackage{graphicx}
\usepackage{amsmath}
\usepackage{amssymb}
\usepackage{nicefrac}
\usepackage{microtype}
\usepackage{xcolor}

\usepackage{amsthm}
\usepackage{subcaption}
\usepackage{float}
\usepackage{wrapfig}
\usepackage{colortbl}
\usepackage{xspace}
\usepackage[breakable,skins]{tcolorbox}
\usepackage{enumitem}
\usepackage{tabularx}
\usepackage{titletoc}
\usepackage{algorithm}
\usepackage{algorithmic}
\usepackage{hyperref}

\input{macros}

\input{boxes}
\hypersetup{colorlinks=true, linkcolor=mylinkcolor,
            citecolor=mylinkcolor, urlcolor=mylinkcolor}

\input{sections/tables/slots}

\title{ERRAND: Budgeted Maintenance of Agent Memory}

\author{%
  Beining Wu \quad Zihao Ding \quad Jun Huang\\
  Department of Electrical Engineering and Computer Science\\
  South Dakota State University, Brookings, SD 57007, USA\\
  \texttt{\{Wu.Beining, Zihao.Ding\}@jacks.sdstate.edu}\\
  \texttt{Jun.Huang@sdstate.edu}\\
}

\begin{document}
\raggedbottom

\maketitle

\input{sections/0_abstract}

\begin{figure}[H]
\centering
\includegraphics[width=0.377\linewidth]{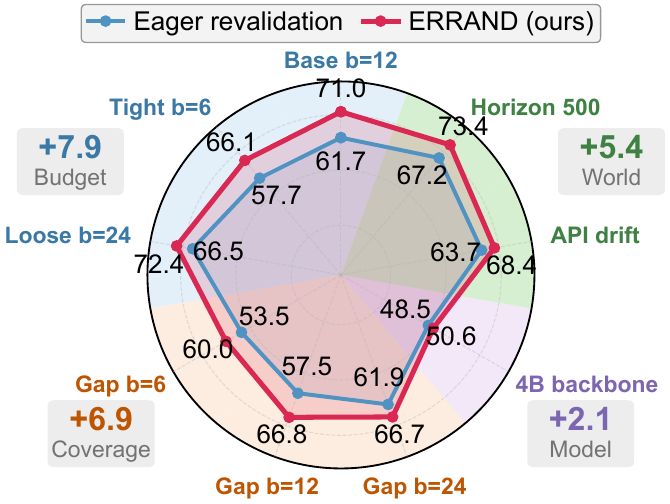}\hfill%
\includegraphics[width=0.595\linewidth]{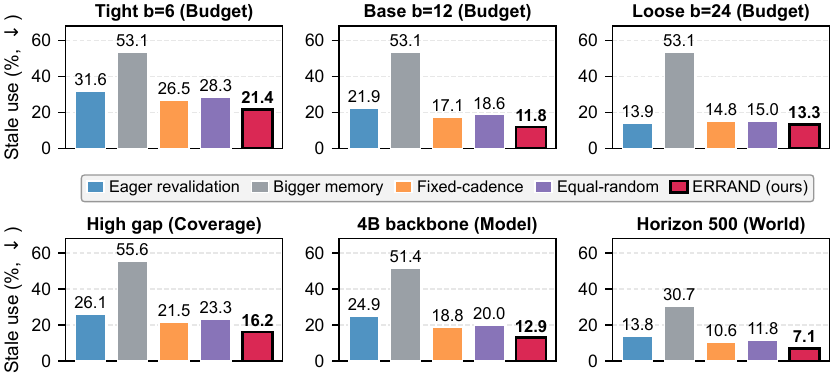}
\caption{\textbf{Priced revalidation beats spending the same budget without
prices, everywhere we deploy it.} \textbf{Left:} success across nine
deployment axes; badges: mean gain per axis family. \textbf{Right:}
stale-use share ($\downarrow$) across six settings, \sys{} lowest in all.
One protocol: frozen agent, drifting world, hard action caps; oracle arms
are priced in Table~\ref{tab:main}.}
\label{fig:teaser}
\end{figure}

\input{sections/1_intro}
\input{sections/2_related}
\input{sections/3_method}
\input{sections/4_experiments}
\input{sections/5_conclusion}

{\small
\bibliographystyle{plainnat}
\bibliography{references}
}

\appendix
\setlength{\textfloatsep}{8pt plus 2pt minus 2pt}
\setlength{\intextsep}{6pt plus 2pt minus 2pt}
\setlength{\floatsep}{8pt plus 2pt minus 2pt}
\makeatletter
\setlength{\@neuripsabovecaptionskip}{4\p@}
\makeatother
\setlength{\abovecaptionskip}{4pt}
\input{sections/6_appendix}

\end{document}

%% file: macros.tex
\newcommand{\sys}{\textsc{Errand}\xspace}

\newcommand{\wage}{\hat{\nu}}

\definecolor{delneg}{RGB}{214,60,61}
\definecolor{delpos}{RGB}{58,121,165}

\theoremstyle{definition}

\newtheorem{proposition}{Proposition}

\newcommand{\StaleCost}{\slot{StaleCost}}
\newcommand{\FrontierGap}{\slot{FrontierGap}}

%% file: boxes.tex
\definecolor{mylinkcolor}{RGB}{11, 20, 110}

\definecolor{DeltaBg}{HTML}{D4F2D7}
\definecolor{SearchBg}{HTML}{C2E6F5}
\definecolor{AgenticBg}{HTML}{F5C2CC}
\definecolor{MathBg}{HTML}{E6D4F2}
\definecolor{ScienceBg}{HTML}{FBE0BC}

\definecolor{darkgreen}{RGB}{0, 181, 18}
\definecolor{darkred}{RGB}{252, 90, 90}
\newcommand{\mygreen}[1]{\cellcolor{darkgreen!#1}}
\newcommand{\myredcell}[1]{\cellcolor{darkred!#1}}

\definecolor{toolcardbox}{RGB}{240, 248, 255}
\definecolor{toolcardborder}{RGB}{52, 52, 173}
\definecolor{mygray}{RGB}{242, 242, 242}
\definecolor{codegray}{gray}{0.95}
\definecolor{failbg}{RGB}{248, 230, 234}
\definecolor{failframe}{RGB}{176, 36, 24}
\definecolor{failbadge}{RGB}{225, 151, 168}
\definecolor{successbg}{RGB}{239, 255, 229}
\definecolor{successframe}{RGB}{34, 139, 34}
\definecolor{successbadge}{RGB}{182, 200, 108}

\newtcolorbox{summarybox}{%
    colback=toolcardbox,
    colframe=toolcardborder,
    arc=3pt,
    boxrule=0.8pt,
    left=2pt, right=2pt, top=2pt, bottom=2pt,
    fonttitle=\bfseries,
    breakable,
}

\newtcolorbox{limitationbox}{%
    colback=failbg,
    colframe=failframe,
    arc=1pt,
    boxrule=0.8pt,
    left=4pt, right=4pt, top=4pt, bottom=4pt,
    fonttitle=\bfseries\small,
    title={Limitation},
    coltitle=white,
    fontupper=\small,
    breakable,
    before skip=6pt, after skip=6pt
}

\newtcolorbox{bestpracticebox}{%
    colback=successbg,
    colframe=successframe,
    arc=1pt,
    boxrule=0.8pt,
    left=4pt, right=4pt, top=4pt, bottom=4pt,
    fonttitle=\bfseries\small,
    title={Best Practice},
    coltitle=white,
    fontupper=\small,
    breakable,
    before skip=6pt, after skip=6pt
}

\newtcolorbox{custombox}[2][]{
    colback=toolcardbox,
    colframe=toolcardborder,
    coltitle=white,
    arc=1pt,
    boxrule=1pt,
    fonttitle=\bfseries,
    title={#2},
    left=5pt, right=5pt, top=5pt, bottom=5pt,
    before skip=1em, after skip=1em,
    fontupper=\small,
    breakable,
    width=1.\linewidth,
    #1
}

\newtcolorbox{casebox}[3]{
    colback=#1,
    colframe=#2,
    arc=1pt,
    coltitle=white,
    fonttitle=\bfseries,
    title={#3},
    boxrule=1pt,
    rounded corners,
    breakable
}
\newtcolorbox{casecode}[1]{%
    colback=#1,
    colframe=toolcardbox,
    arc=1pt,
    boxrule=0pt,
    top=-3pt, bottom=0pt,
    left=3pt, right=3pt,
    boxsep=0pt,
    fontupper=\ttfamily\fontsize{8.0}{10}\selectfont,
    breakable,
}
\newtcolorbox{simplecode}[1][]{%
    colback=toolcardbox,
    colframe=toolcardbox,
    arc=1pt,
    boxrule=0pt,
    top=-3pt, bottom=0pt,
    left=3pt, right=3pt,
    boxsep=0pt,
    fontupper=\ttfamily\fontsize{8.0}{10}\selectfont,
    breakable,
    #1
}

\definecolor{zebra}{gray}{0.93}
\definecolor{herobg}{RGB}{230,239,246}
\definecolor{grpblue}{RGB}{228,238,246}
\definecolor{grpteal}{RGB}{228,242,240}
\definecolor{grppurple}{RGB}{236,232,243}%
\definecolor{grporange}{RGB}{252,240,229}%
\newcommand{\stdv}[1]{$_{\pm\text{\scriptsize #1}}$}

\newtcolorbox{envcardbox}[1]{breakable, enhanced, colback=toolcardbox,
  colframe=toolcardborder, coltitle=white, fonttitle=\bfseries\small,
  title={Environment Card: #1}, rounded corners, boxrule=1pt,
  left=5pt, right=5pt, top=4pt, bottom=4pt, fontupper=\small}

\definecolor{bandblue}{RGB}{232,239,249}
\definecolor{bandgreen}{RGB}{233,243,232}
\definecolor{bandorange}{RGB}{251,239,225}
\definecolor{bandpurple}{RGB}{242,235,244}
\definecolor{bandred}{RGB}{248,234,232}
\newcommand{\Comment}[1]{\hfill$\triangleright$ \textit{#1}}

%% file: sections/tables/slots.tex
\input{sections/tables/tabs_slots}

\input{sections/tables/fig1_slots}
\input{sections/tables/fig3_slots}

\input{sections/tables/fig4_slots}
\input{sections/tables/fig5_slots}
\input{sections/tables/fig6_slots}
\input{sections/tables/app_slots}

%% file: sections/tables/tabs_slots.tex
\providecommand{\TDeltaZero}{}\renewcommand{\TDeltaZero}{7.1}

\providecommand{\AbCalibDrop}{}\renewcommand{\AbCalibDrop}{13.0}
\providecommand{\AbDeadbandDrop}{}\renewcommand{\AbDeadbandDrop}{3.5}
\providecommand{\AbRegimeDrop}{}\renewcommand{\AbRegimeDrop}{5.0}
\providecommand{\AbComparatorDrop}{}\renewcommand{\AbComparatorDrop}{3.1}

\providecommand{\OvCallsOurs}{}\renewcommand{\OvCallsOurs}{1.10}
\providecommand{\OvCallsZero}{}\renewcommand{\OvCallsZero}{1.40}

%% file: sections/tables/fig3_slots.tex
\providecommand{\FrontierBTwoGain}{}\renewcommand{\FrontierBTwoGain}{4.5}

\providecommand{\NuInfFloor}{}\renewcommand{\NuInfFloor}{51.1}

%% file: sections/tables/fig4_slots.tex
\providecommand{\StaleCost}{}\renewcommand{\StaleCost}{79.5}
\providecommand{\StaleFresh}{}\renewcommand{\StaleFresh}{85.8}
\providecommand{\StaleStale}{}\renewcommand{\StaleStale}{6.3}
\providecommand{\FrontierGap}{}\renewcommand{\FrontierGap}{10.0}
\providecommand{\BudgetGapTight}{}\renewcommand{\BudgetGapTight}{8.0}
\providecommand{\BudgetGapBase}{}\renewcommand{\BudgetGapBase}{10.0}
\providecommand{\BudgetGapLoose}{}\renewcommand{\BudgetGapLoose}{5.2}
\providecommand{\BBDropItt}{}\renewcommand{\BBDropItt}{4.7}

\providecommand{\SelfCapSpend}{}\renewcommand{\SelfCapSpend}{11.0}
\providecommand{\EagerNocapSpend}{}\renewcommand{\EagerNocapSpend}{70.7}

\providecommand{\SpendMultiple}{}\renewcommand{\SpendMultiple}{6.4}
\providecommand{\HeroBaseItt}{}\renewcommand{\HeroBaseItt}{71.9}
\providecommand{\EagerBaseItt}{}\renewcommand{\EagerBaseItt}{61.9}
\providecommand{\FrontierGapHigh}{}\renewcommand{\FrontierGapHigh}{9.9}
\providecommand{\CondGapBase}{}\renewcommand{\CondGapBase}{1.6}
\providecommand{\CondGapHigh}{}\renewcommand{\CondGapHigh}{3.9}
\providecommand{\EagerNocapTrail}{}\renewcommand{\EagerNocapTrail}{4.5}

%% file: sections/tables/fig5_slots.tex
\providecommand{\CorrflipExcessLo}{}\renewcommand{\CorrflipExcessLo}{0.6}
\providecommand{\CorrflipExcessHi}{}\renewcommand{\CorrflipExcessHi}{3.6}

\providecommand{\CalibStepCal}{}\renewcommand{\CalibStepCal}{2.6}
\providecommand{\CalibStepPrice}{}\renewcommand{\CalibStepPrice}{10.3}
\providecommand{\CalibSpendPct}{}\renewcommand{\CalibSpendPct}{10.6}

%% file: sections/tables/fig6_slots.tex
\providecommand{\DoseHeroMin}{}\renewcommand{\DoseHeroMin}{66.5}
\providecommand{\DoseHeroMax}{}\renewcommand{\DoseHeroMax}{69.7}
\providecommand{\DoseLitPeak}{}\renewcommand{\DoseLitPeak}{68.3}
\providecommand{\DoseLitFloor}{}\renewcommand{\DoseLitFloor}{46.7}
\providecommand{\GapSlopeLo}{}\renewcommand{\GapSlopeLo}{2.3}
\providecommand{\GapSlopeMid}{}\renewcommand{\GapSlopeMid}{7.4}

\providecommand{\GapSlopeHi}{}\renewcommand{\GapSlopeHi}{12.9}
\providecommand{\BandSpendHeroLo}{}\renewcommand{\BandSpendHeroLo}{7.4}
\providecommand{\BandSpendHeroHi}{}\renewcommand{\BandSpendHeroHi}{12.7}
\providecommand{\BandSpendEagerLo}{}\renewcommand{\BandSpendEagerLo}{13.3}
\providecommand{\BandSpendEagerHi}{}\renewcommand{\BandSpendEagerHi}{11.5}

%% file: sections/tables/app_slots.tex
\providecommand{\ApRosterAtoms}{}\renewcommand{\ApRosterAtoms}{25}
\providecommand{\ApNsrHatLooseActive}{}\renewcommand{\ApNsrHatLooseActive}{81}
\providecommand{\ApNsrZeroTightCap}{}\renewcommand{\ApNsrZeroTightCap}{70}
\providecommand{\ApStaleHatB}{}\renewcommand{\ApStaleHatB}{12.2\stdv{1.0}}

\providecommand{\ApLifeboatRho}{}\renewcommand{\ApLifeboatRho}{+1.00}
\providecommand{\ApLifeboatMaxGap}{}\renewcommand{\ApLifeboatMaxGap}{$+11.0$\,pp [+10.4, +11.7]}
\providecommand{\ApLifeboatNB}{}\renewcommand{\ApLifeboatNB}{-0.01}
\providecommand{\ApLifeboatSat}{}\renewcommand{\ApLifeboatSat}{+2.01}
\providecommand{\ApAwLitPeak}{}\renewcommand{\ApAwLitPeak}{68.3}
\providecommand{\ApAwLitFloor}{}\renewcommand{\ApAwLitFloor}{46.7}
\providecommand{\ApAwHatMin}{}\renewcommand{\ApAwHatMin}{66.5}
\providecommand{\ApAwHatMax}{}\renewcommand{\ApAwHatMax}{69.7}
\providecommand{\ApCorrExcess}{}\renewcommand{\ApCorrExcess}{1.94\stdv{0.81}}
\providecommand{\ApFourBGap}{}\renewcommand{\ApFourBGap}{$+2.1$\,pp [+1.5, +2.6]}
\providecommand{\ApClosureOff}{}\renewcommand{\ApClosureOff}{0.00}
\providecommand{\ApClosureOn}{}\renewcommand{\ApClosureOn}{2.22}
\providecommand{\ApGapAtFive}{}\renewcommand{\ApGapAtFive}{6.2}
\providecommand{\ApGapRatio}{}\renewcommand{\ApGapRatio}{1.6}
\providecommand{\ApNumLadder}{}\renewcommand{\ApNumLadder}{0.060 < 0.175 < 0.236}
\providecommand{\ApStaleCost}{}\renewcommand{\ApStaleCost}{79.5}
\providecommand{\ApFrontierGap}{}\renewcommand{\ApFrontierGap}{$+10.0$\,pp [+8.6, +11.4]}
\providecommand{\ApFrontierGapM}{}\renewcommand{\ApFrontierGapM}{10.0}

\providecommand{\ApCondMatched}{}\renewcommand{\ApCondMatched}{$+4.9$\,pp [+3.3, +6.6]}
\providecommand{\ApCondHalf}{}\renewcommand{\ApCondHalf}{1.66}
\providecommand{\ApCondEight}{}\renewcommand{\ApCondEight}{$+4.2$\,pp [+2.5, +6.0]}
\providecommand{\ApPairsRealized}{}\renewcommand{\ApPairsRealized}{1016}
\providecommand{\ApGapPriceDiff}{}\renewcommand{\ApGapPriceDiff}{$+3.8$\,pp [+1.8, +5.8]}
\providecommand{\ApDordMid}{}\renewcommand{\ApDordMid}{+3.09}
\providecommand{\ApDordHigh}{}\renewcommand{\ApDordHigh}{+7.28}
\providecommand{\ApCalibSegOne}{}\renewcommand{\ApCalibSegOne}{$+3.3$\,pp [-4.1, +10.8]}
\providecommand{\ApCalibSegTwo}{}\renewcommand{\ApCalibSegTwo}{$+9.6$\,pp [+5.8, +13.4]}

\providecommand{\ApRevive}{}\renewcommand{\ApRevive}{$+5.0$\,pp [+1.4, +8.6]}
\providecommand{\ApAbDeepest}{}\renewcommand{\ApAbDeepest}{-13.0}
\providecommand{\ApAlWhitTight}{}\renewcommand{\ApAlWhitTight}{-2.2}
\providecommand{\ApAlMyoTight}{}\renewcommand{\ApAlMyoTight}{-4.8}
\providecommand{\ApAlWhitLoose}{}\renewcommand{\ApAlWhitLoose}{-1.9}
\providecommand{\ApDomSpendSix}{}\renewcommand{\ApDomSpendSix}{+0.03}
\providecommand{\ApDomCondLoose}{}\renewcommand{\ApDomCondLoose}{-0.17}
\providecommand{\ApStaleEagerB}{}\renewcommand{\ApStaleEagerB}{21.9\stdv{1.0}}
\providecommand{\ApStaleOracleB}{}\renewcommand{\ApStaleOracleB}{3.4\stdv{1.3}}
\providecommand{\ApStaleBigmemB}{}\renewcommand{\ApStaleBigmemB}{53.1\stdv{1.8}}
\providecommand{\ApClassTop}{}\renewcommand{\ApClassTop}{+8.4}
\providecommand{\ApClassFloor}{}\renewcommand{\ApClassFloor}{+3.2}
\providecommand{\ApHatFive}{}\renewcommand{\ApHatFive}{73.4}
\providecommand{\ApZeroFive}{}\renewcommand{\ApZeroFive}{67.2}

%% file: sections/0_abstract.tex
\begin{abstract}
Deployed agents run on handed-over knowledge: a frozen policy consults a
briefing of consolidated items written before the stream begins. The world
then moves while the store stands still: paths close, flags change, price
bands move; every item was true at handover, and the failure is staleness,
not ignorance. We introduce \sys{}, which treats \emph{revalidation as a
priced errand}: a recheck competes with the task it protects for the same
scarce actions, funded only when the value per action of resolving a
doubt clears a running wage. The errand index is single-peaked,
vanishing at both ends of belief, so certainty in either direction costs
nothing; free en-route receipts maintain on-path knowledge, and repair
writes a version, never a deletion. Under equal action budgets in two drifting
tool-use worlds, \sys{} clears every non-oracle policy on the
preregistered calibers, primary in every setting and conditional at every
binding budget, leading eager revalidation by \FrontierGap{}pp at the
base cap. Restraint wins: given no cap, \sys{} stops on its own,
spending \SelfCapSpend\% of steps, while uncapped eager revalidation
spends \EagerNocapSpend\% and still finishes \EagerNocapTrail{}pp behind
capped \sys{}. The
margin sits where the briefing's coverage is thinnest, the shadow price of
long-tail knowledge: a small budget, well priced, beats a bigger store
that never rechecks.
\end{abstract}

%% file: sections/1_intro.tex
\section{Introduction}

Deployed agents increasingly run on consolidated knowledge: a frozen policy
consults a briefing of distilled experience (runbooks, configuration facts,
tool notes) and reuses it across tasks \citep{Shinn2023NEURIPS,
Park2023UIST, Wang2024TMLR, Xu2025NEURIPS, Fang2025ARXIV}. A maturing line of work governs
what such a store admits and keeps, pricing capacity by the byte and
entries by their net value \citep{Wu2026ARXIVa, Wu2026ARXIVb, Mi2026ICML, Wu2026TNSE, Wu2026ARXIVc}.
It is widely assumed that an entry that survives admission and curation is
an asset from then on: validity is settled at write time, and upkeep during
deployment has no budget line. In this work we take a critical look at that
write-time contract on streams where the world moves and nothing announces
it: paths close, flags change, price bands move. The contract fails
quietly: every item can be true at handover and the store still decays,
because what an item asserts outlives the moment it was checked. When actions are scarce, which of the beliefs an agent already
trusts is worth an action to recheck, and when?

\looseness=-1
A first family of responses governs the store itself: admission gates at
write or induction time \citep{Shu2026ARXIV, Mi2026ICML}, transactional
commit with typed repair \citep{Li2026ARXIV}, and belief revision over
versioned memory graphs \citep{Park2026ARXIV}. A second family goes after
freshness directly: models audit their own knowledge \citep{Guo2026ARXIV},
dissonance triggers re-probing of the environment \citep{Yin2026ARXIV},
and stale facts are located and rewritten in the weights
\citep{Meng2022NEURIPS} or refreshed against live search \citep{Vu2024ACL}. Each is, however, half an answer: (i)~\emph{store-side}
governance prices size, value, and consistency, so an entry's validity is
fixed at admission and revised only when contradiction arrives; freshness
is an event the store waits for, not a quantity it manages. And
(ii)~\emph{freshness-side} checks are unpriced: verdicts a model issues
about itself over-validate \citep{Andrade2026ICLR}, detection scores fail
to become better behavior \citep{Chao2026ARXIV}, and the checks that do act
in the world fire immediately and undifferentiated, blind to what resolving
one doubt is worth against the task step it displaces. Nor do benchmarks force the issue:
conversational suites rarely contain the contradictions that would
separate methods \citep{Singh2026ARXIV}, and test-time continual learning
suites record the symptom while their answer keys anchor to design-time
state \citep{Zhang2026ARXIV}.

Our answer is a single principle: \emph{revalidation is a priced errand}. A
recheck is an action in the same world the tasks run in; it competes with
the task it protects for the same scarce steps, and it deserves funding
exactly when the value of resolving a doubt, per action, clears the going
wage of the budget. The fulcrum is that certainty is free at both ends: an
item almost surely fresh needs no check and an item almost surely stale
needs a takedown, so the value of resolving is single-peaked and every
funded action goes to decision-relevant doubt. \sys{} instantiates the
principle: drift-aware beliefs over the atoms beneath each item, moved only
by evidence about the world; free en-route receipts collected wherever task
traffic already goes; a deadband gate that funds the cheapest sufficient
errand once its value per action clears that wage; and a versioned
lifecycle that suspends and supersedes instead of deleting.

Our work makes three contributions. (1)~We formulate budgeted maintenance
of consolidated memory and give it closed forms: belief relaxation under
two-state drift, the single-peaked value of resolving, a deadband whose
width is exact in the single-atom regime, and a durability horizon for how
long a purchased answer stays decision-relevant. A preregistered prediction
test backs the forms: across four flip groups, every seed-level measured
gain lands on or above its closed-form forecast, so the model reads as a
conservative bound. (2)~We build \sys{}, a scheduler that collects free
receipts before pricing anything, buys the cheapest sufficient grade of
check, repairs by versioning, and schedules in closed form without spending
a model call. Priced against seven policies on one substrate, it clears
every non-oracle policy on the preregistered calibers, the primary in
every setting and the conditional at every binding budget,
and a five-policy ladder at matched spend splits the margin into a
calibration step and a pricing step.
(3)~We map when pricing matters: across a tenfold drift sweep, half-life
priors collapse once real drift decouples from the calendar while \sys{}
holds flat, and by coverage-gap severity the premium climbs monotonically
as the coverage under a step thins.

\looseness=-1
Two findings run against intuition. First, restraint wins: given no cap at
all, \sys{} stops on its own at \SelfCapSpend\% of steps, while uncapped
eager revalidation spends \EagerNocapSpend\%, \SpendMultiple$\times$ more,
and still finishes \EagerNocapTrail{}pp behind the capped system; the wage
governs before the cap does. Second, the premium is a shadow price: at
equal budget the margin over eager revalidation climbs from \GapSlopeLo{}pp
on fully covered steps to \GapSlopeHi{}pp where the briefing's coverage is
thinnest, and the headline \FrontierGap{}pp is that curve's
density-weighted average, so paid ordering matters most exactly where the
briefing runs out. What a maintenance budget buys is set by its prices
before its size.

%% file: sections/2_related.tex
\section{Related Work}

\paragraph{Governing a growing store.}
\looseness=-1
Deployed agents write their experience into persistent stores and reuse it
across tasks \citep{Shinn2023NEURIPS, Park2023UIST, Wang2024TMLR,
Packer2023ARXIV, Wang2025ICML, Ouyang2026ICLR, Xu2025NEURIPS,
Chhikara2025ARXIV, Wu2026ARXIVd, Ding2026TAAS, Wu2026ICDCS}. Governance of these stores has matured along two budgets:
CrystalMem schedules capacity and fidelity under a byte budget, and
on-device curation scores entries by net value per byte
\citep{Wu2026ARXIVa, Wu2026ARXIVb, Wu2026IoTJ, Tchalla2026ARXIV}. Lifecycle and repair interfaces are
appearing: transactional commit with typed cascade repair \citep{Li2026ARXIV},
belief-revision semantics over versioned memory graphs \citep{Park2026ARXIV},
and admission gates at write or induction time \citep{Shu2026ARXIV,
Mi2026ICML}. Across this line an entry's validity is fixed when it is written,
or revised when new input happens to contradict it. GLOVE comes closest to
maintenance: it detects dissonance and re-probes the environment, but probing
is immediate, undifferentiated, and unbudgeted \citep{Yin2026ARXIV}. The
store's size, value, and consistency are all priced; whether its contents
still hold in the world, and which check is worth an action today, is not.

\paragraph{Free verdicts and where they fail.}
\looseness=-1
A natural fix is to let the model audit its own knowledge, and the evidence
is against it: self-verification over-validates and stays so under test-time
scaling \citep{Andrade2026ICLR, Guo2026ARXIV, Jiang2025ARXIV}.
Static contract checks catch only enumerable preconditions
\citep{Fan2026ARXIV}; confidence read off historical success inflates on
exactly the entries reused most \citep{Cui2026ICMLW, Asadolahi2026ARXIV}; and
read-time filtering saturates: even oracle staleness labels close little of
the remaining gap \citep{Sun2026ARXIV}, detection scores fail to become
better behavior \citep{Chao2026ARXIV}, and standard conversational benchmarks
rarely contain the contradictions that would separate methods
\citep{Singh2026ARXIV}. Benchmarks built for test-time continual learning
record the symptoms, agents losing track of moved objects and outdated rules,
while their answer keys anchor to design-time state
\citep{Zhang2026ARXIV}. These verdicts share one property: they are
free. A verdict about a world that moved costs an action in that world.

\paragraph{Pricing freshness.}
\looseness=-1
Keeping a mirror fresh under a budget is classical: crawl scheduling under
known change rates \citep{Cho2000SIGMOD, Cho2003TODS, Wu2025TON} and noisy change
signals \citep{BusaFekete2025ARXIV, Fang2025TON, Fang2025JSAC}, age-of-information updating
\citep{Kaul2011SECON, Kaul2012INFOCOM, Wu2023ACCESS, Fang2022JSAC, Wu2026MNET, Wu2026COMST, Ding2026ICDCS}, index policies that price activation
in changing worlds \citep{Gittins1979JRSSB, Whittle1988JAP, NinoMora2026ARXIV, Ding2025IPCCC, Ding2026ICNC, Xing2026ACR, Dong2026TCCN},
value of information as the currency \citep{Howard1966TSSC, Wu2023MPE}, event-triggered
rules for when a costly check fires \citep{Wang2026ECMLPKDD, Fang2026ARXIV}, and
re-audit budgets on live tool registries \citep{Bharti2026ARXIV, Ding2026MNET, Ding2026ARXIVb, Pudasaini2026HPSR, Ding2026ARXIVa}. In
these problems freshness is itself the objective, every check is a dedicated
probe of uniform kind, and the answer feeds a mirror rather than a decision
loop. \sys moves this economics inside the agent's own action budget: a
recheck competes with the task it serves, items carry applicability
conditions and provenance, the value of resolving is single-peaked so
certainty at either end spends nothing, and repair is versioned for regimes
that return.

%% file: sections/3_method.tex
\section{Revalidation as a Priced Errand}
\label{sec:method}

\begin{figure}[t]
  \centering
  \includegraphics[width=0.95\linewidth]{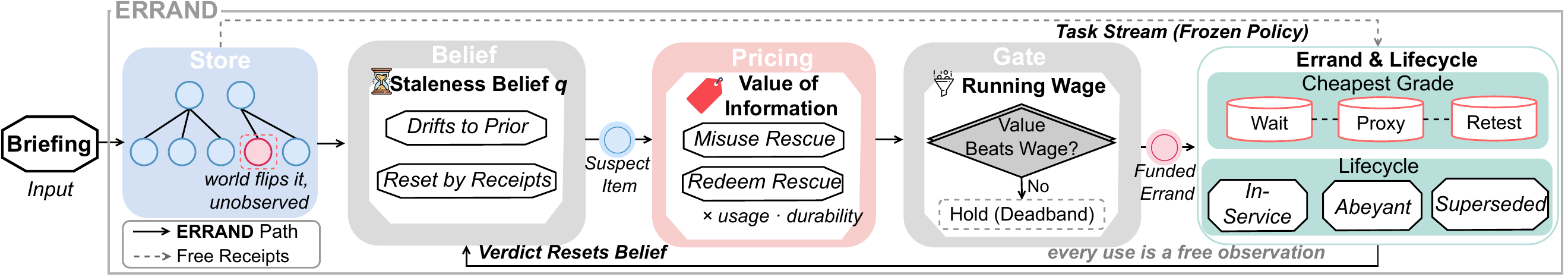}
  \caption{\textbf{The budget buys trajectory changes, never attention.}
  Beliefs over grounded atoms drift toward the prior and reset on free
  en-route receipts; the single-peaked index prices each off-path item; the
  deadband gate funds an errand only when value per action clears the running
  wage $\wage$; repair chooses the cheapest sufficient grade and writes a
  version, never a deletion. Abeyance closes use while the comparator keeps
  watching.}
  \label{fig:loop}
\end{figure}

\looseness=-1
\sys runs on any deployed agent whose frozen policy consults a store of
consolidated knowledge at retrieval time. It changes neither the policy nor
what the store contains at rest; it governs a single scarce quantity: the
actions the agent diverts from tasks to recheck what it already believes.
Four routines implement one loop (Figure~\ref{fig:loop}): a drift-aware belief over every grounded
atom, a single-peaked index that prices what resolving each item is currently
worth, a deadband gate that funds an errand only when that price clears a
running wage, and a versioned lifecycle that suspends and supersedes instead
of deleting.

\subsection{Setting and Threat Model}

\looseness=-1
A frozen-policy agent serves a task stream over a horizon of $T$ steps from a
briefing $M$: consolidated knowledge items handed over at deployment
(runbooks, configuration facts, tool notes) rather than earned through the
agent's own trajectories. Each item $i$ grounds in a set $A_i$ of
\emph{atoms}, individually checkable facts about the world: a path exists, a
flag is accepted, a price band holds. The world moves on its own schedule.
Atom $a$ flips between its recorded value and its negation as a two-state
Markov chain with flip-out intensity $\lambda_a$ and flip-back intensity
$\mu_a > 0$; regimes leave and return. Absent observation, the agent's
staleness belief $q_a$, the probability that $a$ no longer holds as recorded,
relaxes in closed form,
{\small
\begin{equation}
    \label{eq:relax}
    q_a(t + \Delta) \;=\; q_a^{\infty} \;+\;
    \bigl(q_a(t) - q_a^{\infty}\bigr)\, e^{-\kappa_a \Delta},
\end{equation}
}where $\kappa_a = \lambda_a + \mu_a$ is the total flip intensity and
$q_a^{\infty} = \lambda_a / \kappa_a$ the stationary suspicion. An item is
suspect once any of its atoms is, $q_i = 1 - \prod_{a \in A_i} (1 - q_a)$
under declared independence; instrumentation keeps most items single-atom,
the regime where every closed form below is exact, and a preregistered
correlated-flip suite stresses the declaration deliberately.

\looseness=-1
Maintenance competes with work. At each step the agent either advances its
task or runs an \emph{errand}: a targeted recheck that resolves an atom by
acting on the environment, at cost $c$ actions, under a cap of $b$ errand
actions per hundred steps. Beliefs are not bookkeeping over past success
rates; they move only on evidence about the world, and evidence has exactly
two sources. Whenever a task step happens to traverse a site, the agent reads
the outcome as an \emph{en-route receipt}, free corroboration or refutation
of every belief it touches. Everything else must be bought. The threat needs
no adversary: knowledge off the task path collects no receipts, its belief
relaxes toward $q^{\infty}$, and the next task routed there executes against
a world that moved. The failure is staleness, not ignorance; every item was
true when handed over. We assume atoms are cheap to check once reached,
relative to judging an item's usefulness in the abstract; our environments
make the reaching, not the checking, the scarce step.

\subsection{The Errand Index and the Deadband Gate}

\paragraph{En-route receipts.}
Every use is an observation. A served item that executes against the world
returns a receipt at zero marginal cost, and the scheduler collects all of
them before it prices anything. High-traffic knowledge therefore maintains
itself; the budget exists for the tail that traffic does not reach.

\paragraph{The single-peaked value of resolving.}
\looseness=-1
What is resolving one item worth? If the item is in service, learning it
flipped saves the misuse loss $\ell_i$, and the chance the answer is
``flipped'' is $q_i$. If the item sits in abeyance, learning it held redeems
the forgone gain $g_i$, with chance $1 - q_i$. The value of information is
whichever rescue is live,
{\small
\begin{equation}
    \label{eq:voi}
    \mathrm{VOI}_i(q_i) \;=\; \min\bigl\{\, q_i\, \ell_i,\;\;
    (1 - q_i)\, g_i \,\bigr\},
\end{equation}
}and the shape does the governing: $\mathrm{VOI}$ vanishes at both ends,
because an item almost surely fresh needs no check and an item almost surely
stale needs a takedown, not a test. Certainty in either direction is free. A
candidate errand at site $p$ scales this peak by how often the answer matters
and for how long it stays decision-relevant,
$V(i,p,t) = \mathrm{VOI}_i(q_i)\; \hat{u}_i \min\{\tau_i,\, T - t\}\;
\xi(p,i)$, with $\hat{u}_i$ the item's usage rate, $\tau_i$ its durability
horizon (below), and $\xi(p,i) \in (0,1]$ a locality discount. No static
checklist reproduces this ordering: the index is a live function of belief,
traffic, and remaining time rather than a fixed property of the item.

\paragraph{One comparison, two readings.}
Each step the gate compares candidates' value per action, $V/c$, to a wage
$\wage$, the shadow price of the errand budget, in the lineage of
restless-bandit index policies \citep{Whittle1988JAP, Huang2025TMC, Wu2025WASA}. Read across
candidates, the comparison ranks: fund the highest $V/c$ first, extending to
co-located batches greedily while each marginal member clears the wage. Read
across time, the comparison stops: on steps where no candidate clears it,
the agent works, and the store sits in its \emph{deadband}. Inaction is a
priced decision, held accountable by a resolve-by date and a minimum revisit
gap $\Delta_{\min}$ \citep{Wang2026ECMLPKDD, Wu2025RACS, Fang2026GLOBECOM, Duan2023COMST, Pan2023SCIS}, so deferral waits for the free
channel rather than drifting past it. In the single-atom regime the band has
closed width,
{\small
\begin{equation}
    \label{eq:band}
    \lvert \mathcal{D}_i \rvert \;=\;
    \frac{\wage\, c\, (g_i + \ell_i)}{g_i\, \ell_i\, N_i},
\end{equation}
}where $N_i$ counts the item's usage receipts: costlier checks widen the
band, heavier use narrows it, so doubt is tolerated longest exactly where it
is cheapest to hold.

\paragraph{Three grades of check.}
\looseness=-1
Errands come in grades, cheapest sufficient grade first: an en-route deferral
(wait for traffic already headed there), a proxy reading (an adjacent
observable that moves with the atom), and a direct retest (reach the site and
act on it). Grade changes only $c$ and the fraction of doubt one visit
resolves; the same wage prices all three, so the gate never pays retest
prices for proxy-grade questions.

\subsection{A Lifecycle That Closes Use, Never Eyes}

\looseness=-1
Repair is a state change, not a deletion. An item runs \emph{in-service},
where retrieval presents it and every use pays receipts, or \emph{abeyant},
where retrieval withholds it and its belief relaxes toward the prior; a
resolved flip \emph{supersedes} the recorded value in place, versioned so a
regime that returns costs a lookup rather than relearning. One rule is
load-bearing: abeyance closes use, never eyes. Suspending an item removes it
from execution, so its receipts stop because its uses stop; but the
comparator that matches en-route observations against recorded values keeps
reading every traversal, gated by nothing. It must, because gating reads
would starve the redemption evidence: an abeyant item watched for free either
revalidates on the next traversal, at zero budget, or stays suspect and
competes for the same wage through the $g$-branch of its $\mathrm{VOI}$.
Detection and revival are one quantity in one currency.

\looseness=-1
How long a purchased answer stays decision-relevant has its own closed form.
After a resolution the belief restarts near certainty and relaxes toward
$q_i^{\infty}$; with a decision threshold $q_i^{\ast}$, the belief at which
the item's disposition flips, the durability horizon is the re-crossing
time,
{\small
\begin{equation}
    \label{eq:tau}
    \tau_i \;=\; \frac{1}{\kappa_i}\,
    \ln \frac{\lvert q_i^{\infty} - q_i \rvert}
             {\lvert q_i^{\infty} - q_i^{\ast} \rvert},
\end{equation}
}taken as $\tau_i = \infty$ when $q_i^{\ast}$ does not lie between the
running belief and $q_i^{\infty}$: the answer never expires by drift alone.
These durable purchases are what the index buys first, since $\tau_i$
multiplies $V$; the scheduler prefers questions whose answers stay answered.

\subsection{The Wage and the Meter}

\looseness=-1
The wage is the budget speaking. $\wage$ is maintained as a dual estimate: it
rises while the cap binds and decays while slack, so under a tight cap it
reports scarcity and under a loose one it settles at an intrinsic floor, the
price below which no errand pays even with budget left over. Spending
therefore self-limits before the cap does. Everything the scheduler computes
is closed-form arithmetic over the $k$ retrieved candidates, $O(k)$ per step;
scheduling never spends a model call. The scheduler also explains inaction:
every no-spend step logs which clause held (no candidate, index at zero, gate
below wage, cap reached, check in flight), so the deadband is an auditable
object rather than a silence, and the calibration of $q$ itself is a measured
quantity: an agent that orders its doubt better converts the same budget into
more repair.

\begin{summarybox}
\noindent\textbf{The \sys contract.} (i)~Certainty is free at both ends: the
single-peaked index sends every funded action to decision-relevant doubt, and
in the single-atom regime the deadband width and the durability horizon are
exact, so inaction is a computed, closed-form region.
(ii)~The budget buys trajectory, never attention: en-route comparison is
never gated, suspension closes use only, and repair is versioned, so
redemption and regime return cost a traversal and a lookup.
\end{summarybox}

%% file: sections/4_experiments.tex
\section{Experiments}
\label{sec:experiments}

\setlength{\textfloatsep}{6pt plus 2pt minus 1pt}
\setlength{\intextsep}{5pt plus 2pt minus 1pt}
\setlength{\floatsep}{6pt plus 2pt minus 1pt}
\makeatletter
\setlength{\@neuripsabovecaptionskip}{3\p@}
\makeatother
\setlength{\abovecaptionskip}{3pt}

\subsection{Setup}

\paragraph{Environment and drift.}
\looseness=-1
Every run deploys a frozen Qwen3-8B agent for $T{=}2{,}000$ steps on a
scripted tool-use world of sites, paths, and flags, with the first $300$
steps a warm-up excluded from every metric. The briefing hands over $26$
consolidated items grounded in atoms the environment actually flips: paths
close, flags change, price bands move, under four preregistered intensity
schedules, and nothing in the stream announces a flip. Tasks arrive as
dispatch orders; the dispatch mix sets the coverage gap, the share of
knowledge-bearing steps routed off the briefing's high-traffic sites, in a
base and a high regime. A second tool-API world drives the drift-rate sweep
of Figure~\ref{fig:mech}c, and a 4B backbone and a $500$-step stream stress
the capability floor and the short horizon
(full configuration: Appendix~\ref{app:details}).

\paragraph{Protocol and metrics.}
\looseness=-1
We preregistered two calibers before any full wave: ITT, success over all
errand-relevant steps, and conditional success on knowledge-bearing steps,
which isolates what the store contributed. ITT is primary throughout:
maintenance raises it both by cleaner serves and by stale serves averted,
while the conditional caliber scores only the serves that happen, on each
arm's realized judged domain (reported per arm in
Appendix~\ref{app:results}). Table~\ref{tab:main} reports both. Budgets
are hard caps of $b$ errand
actions per hundred steps, and equal-budget comparisons match measured
spend, not nominal allowance. The frontier arms run five seeds per cap
and eleven on the high-gap tier; baseline arms and breadth settings run
three; error bars are seed s.d. Oracle arms read ground-truth staleness labels
and enter only as priced ceilings. The preregistered analysis plan, with
every band and outcome, is Appendix~\ref{app:power}.

\paragraph{Implementation.}
\sys{} runs its preregistered defaults for the gate, the three errand
grades, and the dual wage estimate, frozen before the full waves
(pseudocode and constants: Appendix~\ref{app:algorithms}; derivations:
Appendix~\ref{app:forms}). Scheduling is closed-form arithmetic over the retrieved
candidates; it never spends a model call (Table~\ref{tab:overhead}).

\subsection{Priced Revalidation Wins at Every Budget}

\begin{wrapfigure}{r}{0.62\textwidth}
  \vspace{-1\baselineskip}
  \centering
  \begin{subfigure}[t]{0.49\linewidth}
    \includegraphics[width=\linewidth]{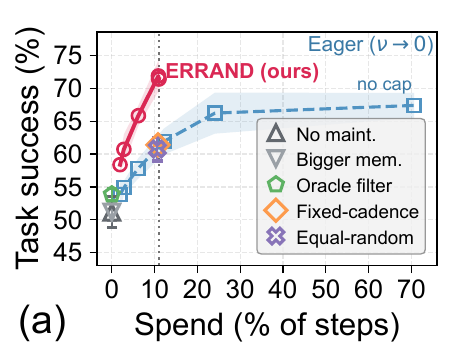}
  \end{subfigure}\hfill
  \begin{subfigure}[t]{0.49\linewidth}
    \includegraphics[width=\linewidth]{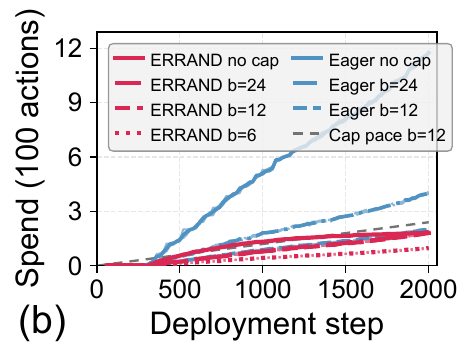}
  \end{subfigure}
  \caption{\textbf{Priced errands buy the frontier, and the scheduler
  stops spending on its own.} \textbf{(a)} Success vs.\ measured spend
  across caps; the dotted line is where uncapped \sys{} settles.
  \textbf{(b)} Every \sys{} cap converges to one pace.
  Bands/whiskers: seed s.d.}
  \label{fig:frontier}
  \vspace{-1\baselineskip}
\end{wrapfigure}
Left alone, consolidated knowledge does not age gracefully: the
no-maintenance arm succeeds on \StaleFresh\% of knowledge steps while its
items still hold and \StaleStale\% once they have flipped, a \StaleCost{}pp
collapse on the same stream. The question is not whether to spend
maintenance actions but what a fixed allowance buys. Figure~\ref{fig:frontier} answers on the spend axis: at
every measured spend level \sys{} sits above eager revalidation, by
\FrontierBTwoGain{}pp at the tightest cap and \FrontierGap{}pp at the base
cap, while the zero-spend family is pinned to a \NuInfFloor\% floor.

\paragraph{Baselines.}
\looseness=-1
Table~\ref{tab:main} prices seven policies on one substrate:
(1)~\textit{zero-spend anchors}: no maintenance, and a bigger store with no
maintenance; (2)~\textit{unpriced spenders} that pay the same budget without
prices: eager revalidation ($\nu{\to}0$), fixed-cadence rechecks, and
equal-budget random errands; (3)~an \textit{oracle detect-and-filter} that
reads ground-truth staleness labels, italicized as a ceiling; and
(4)~\sys{}. Zero-spend rows are cap-invariant, so one run repeats across
budget columns; Appendix~\ref{app:results} extends the grid.

\paragraph{Key insights.}
\looseness=-1
\sys{} tops ITT in every setting and the conditional caliber at every
binding budget: \HeroBaseItt\% at the base
cap against \EagerBaseItt\% for eager revalidation, margins of
\BudgetGapTight/\BudgetGapBase/\BudgetGapLoose{}pp across caps $b{=}6/12/24$
(Figure~\ref{fig:money}c), and \TDeltaZero{}pp on the five-setting average
of Table~\ref{tab:main}. The unpriced spenders are not a ladder toward
\sys{}; they cluster within about a point of one another, and the regime
shift does not close the gap: moving from the base to the high coverage-gap
regime, the best baseline gives up \BBDropItt{}pp while the ITT margin holds
at \FrontierGapHigh{}pp and the conditional margin widens from
\CondGapBase{} to \CondGapHigh{}pp (Figure~\ref{fig:money}ab). The oracle
shows why deletion is not maintenance: reading labels no deployed system
has, it wins the conditional caliber yet trails every spending policy on
ITT, buying precision with coverage. Notably, the scheduler prices itself
out of overspending. Given no cap at all, \sys{} stops at \SelfCapSpend\%
of steps; uncapped eager revalidation spends \EagerNocapSpend\%,
\SpendMultiple$\times$ more, and still finishes \EagerNocapTrail{}pp behind
capped \sys{} (Figure~\ref{fig:frontier}). The wage, not the cap, is doing
the governing. The margin also compounds: at a $500$-step prefix the
frontier gap already reads \ApGapAtFive{}pp and reaches \ApFrontierGapM{}pp
over the full horizon, a ratio of \ApGapRatio{}; every item kept fresh
keeps paying on every later use (Appendix~\ref{app:res-horizon}).

\input{sections/tables/tab1_main}
\begin{figure}[t]
\centering
\includegraphics[width=\linewidth]{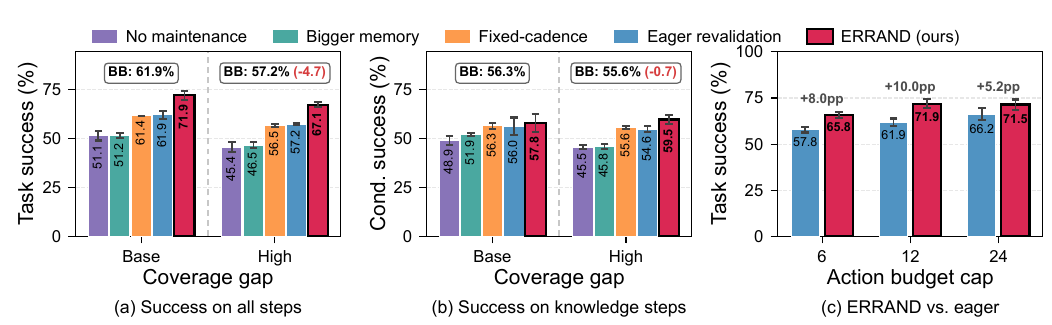}
\caption{\textbf{Priced revalidation wins on both calibers, and the ITT
margin survives every budget cap.} Success on \textbf{(a)} all steps and
\textbf{(b)} knowledge steps under both coverage-gap regimes; boxes: best
baseline (BB) and its cross-regime drop. \textbf{(c)} \sys{}
vs.\ eager revalidation across caps. Error bars: seed s.d.}
\label{fig:money}
\end{figure}

\subsection{The Scheduler's Economics Are Real}

\begin{figure}[t]
  \centering
  \includegraphics[width=0.245\linewidth]{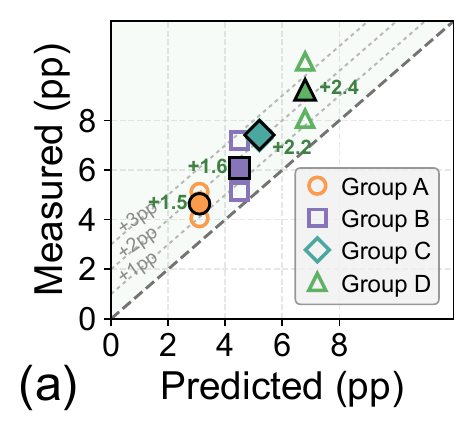}\hfill
  \includegraphics[width=0.245\linewidth]{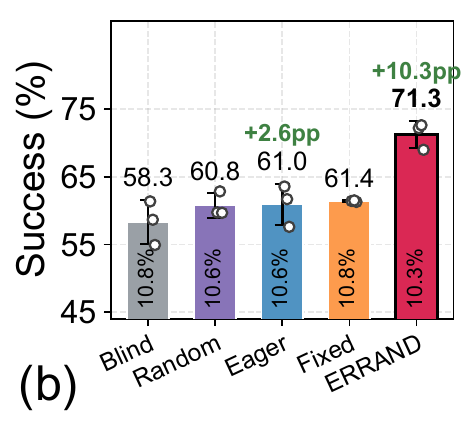}\hfill
  \includegraphics[width=0.245\linewidth]{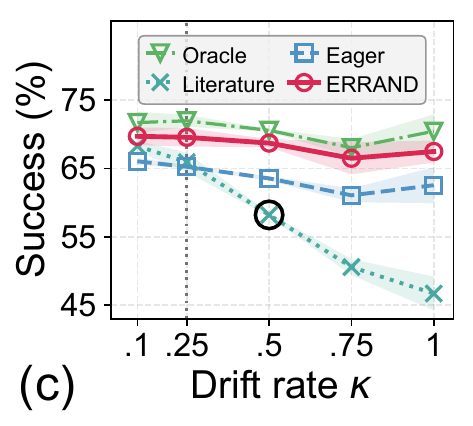}\hfill
  \includegraphics[width=0.245\linewidth]{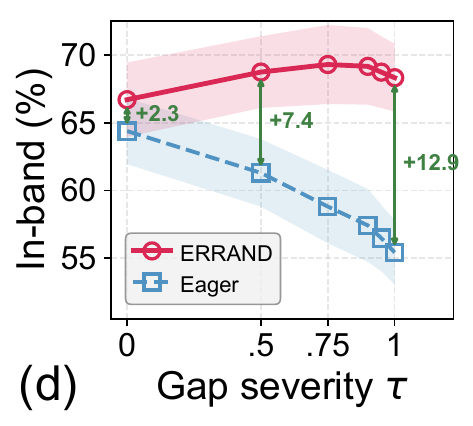}
  \caption{\textbf{The prices are calibrated, and they track the world, not
  the calendar.} \textbf{(a)} All twelve points clear the diagonal.
  \textbf{(b)} Five orderings at matched spend; dots: seeds. \textbf{(c)}
  \sys{} holds flat; past $\kappa^{\ast}$ (dotted) the half-life policy
  collapses (circled). \textbf{(d)} By coverage-gap severity band, \sys{}
  holds while eager falls away; arrows: the conditional gap in points at
  equal budget. Bands: seed s.d.}
  \label{fig:mech}
\end{figure}
\looseness=-1
Why does the same budget buy more when priced? Section~\ref{sec:method}
claims the index knows what resolving each item is worth;
Figure~\ref{fig:mech}ab tests that claim twice. First, prediction. For four
preregistered flip groups we computed the coverage gain the closed forms
predict, then measured it: all twelve seed-level points land on or above
the diagonal, every flip group clears its own forecast on average, and the
measured gain runs \CorrflipExcessLo{} to \CorrflipExcessHi{}pp above the
prediction. The excess has a mechanical reading: en-route receipts keep
repairing items the closed forms book as unresolved, so the model
undercounts what maintenance recovers. It is a conservative bound, not a
fit. Second, allocation. Five policies spend the same \CalibSpendPct\% of
steps (Figure~\ref{fig:mech}b) and differ only in what they believe and how
they order: random, eager, and fixed-cadence errands land within a point of
one another, calibrating beliefs alone buys \CalibStepCal{}pp over a
blind-belief scheduler, and pricing errands on those beliefs buys another
\CalibStepPrice{}pp. The ladder is the method disassembled: the blind arm
keeps the gate and the caps but scores items with an uncalibrated belief;
random keeps the spend and drops the ordering; fixed-cadence replaces doubt
with a clock. The two rungs isolate the machinery: calibration fixes
what the scheduler believes; pricing fixes which doubt is worth an action.
At matched spend, the entire frontier gap is an allocation gap.

\subsection{The Premium Is a Shadow Price}

\looseness=-1
When does pricing matter? Figure~\ref{fig:mech}cd maps the two regime axes.
On the drift axis, \sys{} holds between \DoseHeroMin{} and \DoseHeroMax\%
ITT across a tenfold range of flip intensity, tracking the known-rate
reference. The literature policy, cadences set from published half-life
priors and the field's default answer to staleness, starts competitive at
\DoseLitPeak\% and collapses to \DoseLitFloor\% once real drift decouples
from its priors: it keeps refreshing what the calendar flags, not what the
world flipped. Notably, the collapse point, not the average, is the
argument against fixed schedules; below $\kappa^{\ast}$ the two are
indistinguishable.

\looseness=-1
On the coverage axis, the premium is conditional, and it sits exactly
where coverage is thin. Rank knowledge-bearing steps by coverage-gap
severity, the share of required atoms the briefing does not cover, and
read each band separately (Figure~\ref{fig:mech}d): \sys{} holds its
in-band success nearly flat while eager revalidation falls away, so the
conditional gap widens from \GapSlopeLo{}pp on fully covered steps,
through \GapSlopeMid{}pp at the median band, to \GapSlopeHi{}pp in the
binding tail; the unconditional margin of \FrontierGap{}pp is this curve's
density-weighted average. The budget follows the same gradient: at equal
budget, \sys{} raises its maintenance rate from \BandSpendHeroLo\% on the
safest band to \BandSpendHeroHi\% in the tail, while eager revalidation
runs the opposite slope (\BandSpendEagerLo\% down to \BandSpendEagerHi\%),
paying most where receipts are already free. Paid ordering is the shadow
price of long-tail knowledge. The two axes give the deployment rule: on
covered traffic and below $\kappa^{\ast}$, calendars survive; away from
that corner, the calendar is exactly what fails.

\subsection{Every Errand Pays Its Way}

\begin{table}[t]
  \centering
  \input{sections/tables/tab2_ablation}
\end{table}
Table~\ref{tab:ablation} removes one component at a time at the base cap.
Spend stays equal within noise, so every loss is an allocation loss, and
every removal costs success. The deepest cut is not a mechanism but the
beliefs: an uncalibrated belief drops \AbCalibDrop{}pp, more than the
deadband gate (\AbDeadbandDrop{}pp) or any single index term, because a
scheduler that orders its doubt badly buys the wrong errands. The lifecycle
carries weight of its own: forcing flips to be permanent ($\mu_a{:=}0$)
gives up \AbRegimeDrop{}pp, and gating the comparator while an item sits in
abeyance gives up \AbComparatorDrop{}pp; closing use is affordable only
because the eyes stay open. Overhead is
one line: scheduling is closed-form, so \sys{} averages
\OvCallsOurs{} model calls per step against \OvCallsZero{} for eager
revalidation, and the scheduler itself makes none
(Table~\ref{tab:overhead}).

\begin{table}[t]
  \centering
  \input{sections/tables/tab3_overhead}
\end{table}
\paragraph{Both regimes meet in one entry.}
\looseness=-1
Figure~\ref{fig:case} traces one item through the paired runs at the base
cap. The item's fuse spec flips at step 641. Eager revalidation had
refreshed it at step 503, then spread the window's budget across the
store; the task arrives at step 720, the stale value serves, and the
delivery is rejected. \sys{} held the same drift under the calibrated
gate, with a deadline to resolve; at step 690 a co-located errand
returned the new spec as a free en-route receipt, and the paired task at
step 722 succeeded with the budget never spent.
Restraint was the winning allocation; the aggregate gaps above are this
trajectory, repeated (recorded timelines: Appendix~\ref{app:cases}).

\begin{figure}[H]
  \centering
  \includegraphics[width=0.9\linewidth]{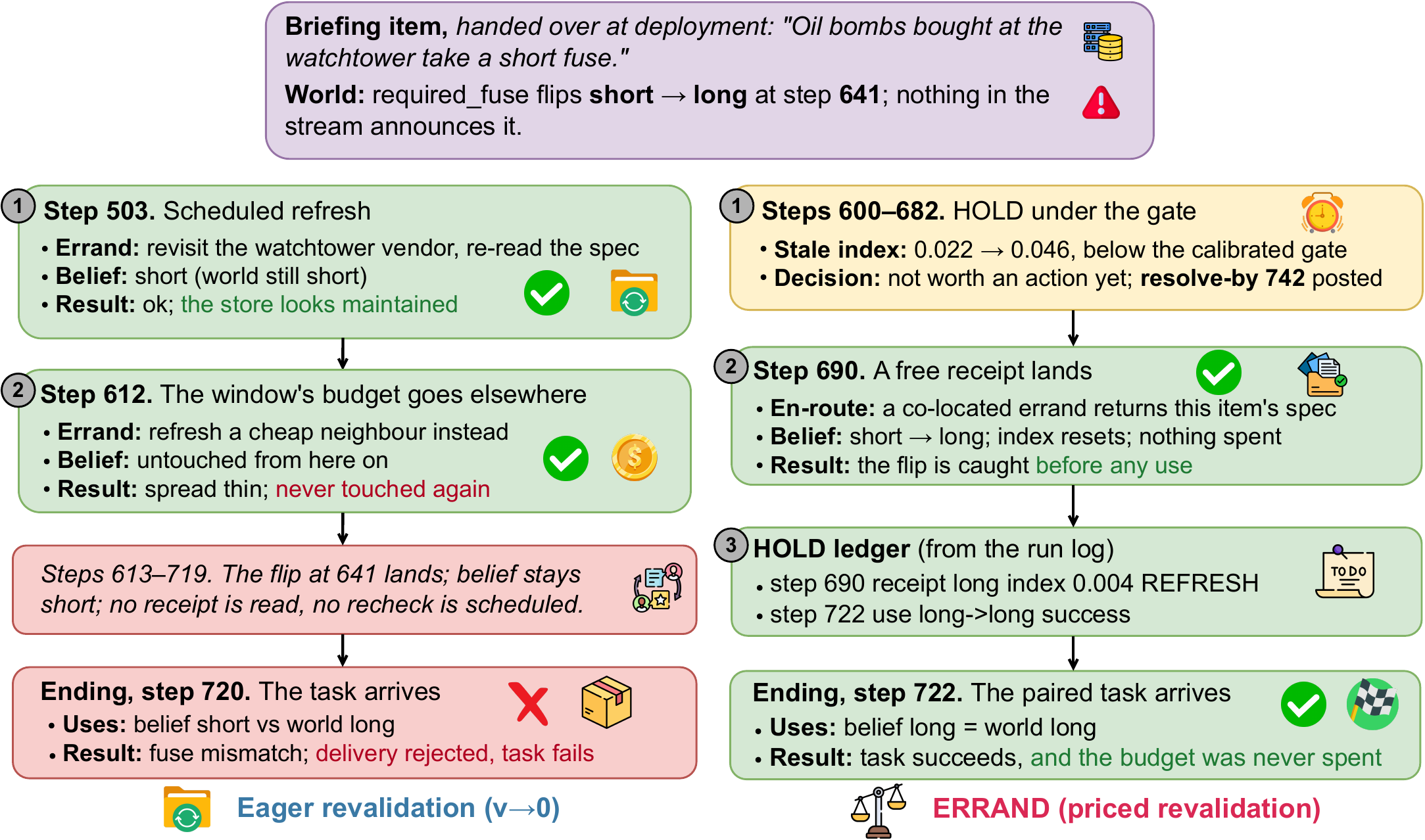}
  \caption{\textbf{One item, two endings.} The same briefing item flips
  mid-stream. Eager revalidation, budget spread thin, serves the stale
  value and fails; \sys{} holds inside the deadband until a co-located
  errand returns the new spec for free, and succeeds.}
  \label{fig:case}
\end{figure}

%% file: sections/tables/tab1_main.tex
\providecommand{\dS}[1]{\textcolor{tabpos}{\bfseries$\uparrow$#1}}
\definecolor{tabpos}{HTML}{1E7A34}
\begin{table}[!b]
\centering\scriptsize
\setlength{\tabcolsep}{2.1pt}
\begin{tabular}{@{}lcccccccccccccc@{}}
\toprule
 & \multicolumn{6}{c}{\cellcolor{SearchBg}\textbf{Budget axis (mid tier, 8B)}} & \multicolumn{2}{c}{\cellcolor{AgenticBg}\textbf{High gap}} & \multicolumn{2}{c}{\cellcolor{MathBg}\textbf{4B floor}} & \multicolumn{2}{c}{\textbf{Average}} & \multicolumn{2}{c}{\cellcolor{DeltaBg}$\Delta$ \textbf{(ours$-$row)}} \\
 & \multicolumn{2}{c}{$b{=}6$} & \multicolumn{2}{c}{$b{=}12$} & \multicolumn{2}{c}{$b{=}24$} & \multicolumn{2}{c}{$b{=}12$} & \multicolumn{2}{c}{$b{=}12$} & & & & \\
\cmidrule(lr){2-3}\cmidrule(lr){4-5}\cmidrule(lr){6-7}\cmidrule(lr){8-9}\cmidrule(lr){10-11}\cmidrule(lr){12-13}\cmidrule(lr){14-15}
Method & ITT\,$\uparrow$ & Cond.\,$\uparrow$ & ITT\,$\uparrow$ & Cond.\,$\uparrow$ & ITT\,$\uparrow$ & Cond.\,$\uparrow$ & ITT\,$\uparrow$ & Cond.\,$\uparrow$ & ITT\,$\uparrow$ & Cond.\,$\uparrow$ & ITT & Cond. & ITT & Cond. \\
\midrule
No maintenance ($\nu{=}\infty$) & 51.1 & 48.9 & 51.1 & 48.9 & 51.1 & 48.9 & 45.4 & 45.5 & 43.8 & 43.6 & 48.5 & 47.1 & \mygreen{54}\dS{16.8} & \mygreen{27}\dS{8.5} \\
Bigger memory, no maint. & 51.2 & 51.9 & 51.2 & 51.9 & 51.2 & 51.9 & 46.5 & 45.8 & 43.6 & 45.0 & 48.7 & 49.3 & \mygreen{53}\dS{16.6} & \mygreen{20}\dS{6.4} \\
\midrule
Fixed-cadence recheck & \underline{58.1} & \underline{53.5} & 61.4 & \underline{56.3} & 64.9 & 57.4 & 56.5 & \underline{55.6} & 46.7 & 43.9 & 57.5 & 53.3 & \mygreen{25}\dS{7.9} & \mygreen{8}\dS{2.4} \\
Equal-budget random & 55.1 & 52.5 & 60.8 & 55.4 & 66.2 & 56.6 & 54.4 & 52.6 & 46.4 & 44.7 & 56.6 & 52.4 & \mygreen{28}\dS{8.8} & \mygreen{11}\dS{3.3} \\
Eager revalidation ($\nu{\to}0$) & 57.8 & 53.4 & \underline{61.9} & 56.0 & \underline{66.2} & \underline{58.0} & \underline{57.2} & 54.6 & \underline{48.5} & \underline{46.0} & \underline{58.3} & \underline{53.6} & \mygreen{23}\dS{7.1} & \mygreen{7}\dS{2.0} \\
\midrule
\textit{Oracle detect-and-filter}$^{\dagger}$ & \textit{53.8} & \textit{70.9} & \textit{53.8} & \textit{70.9} & \textit{53.8} & \textit{70.9} & \textit{47.3} & \textit{63.1} & \textit{45.1} & \textit{49.4} & \textit{50.8} & \textit{65.0} & -- & -- \\
\midrule
\textbf{\sys ($\hat{\nu}_0$)} & \textbf{65.8} & \textbf{56.0} & \textbf{71.9} & \textbf{57.8} & \textbf{71.5} & \textbf{57.9} & \textbf{67.1} & \textbf{59.5} & \textbf{50.6} & \textbf{47.1} & \textbf{65.4} & \textbf{55.7} & -- & -- \\
\bottomrule
\end{tabular}
\caption{\textbf{Under equal action budgets, priced revalidation clears every non-oracle policy on ITT in every setting and on the conditional caliber at every binding budget.} ITT: success over all errand-relevant steps; Cond.: success on judged-domain uses (both \%, $\uparrow$). Frontier arms: five seeds (eleven on the high-gap pair); baselines: three; dispersion and paired intervals: Appendix~\ref{app:results}. Zero-spend rows are cap-invariant, so one run repeats across budget columns. Underline: best non-oracle baseline per column; \colorbox{DeltaBg}{$\Delta$}: average margin of \sys over the row, shaded by size. $^{\dagger}$Oracle reads staleness labels, wins Cond.\ but loses ITT; excluded from $\Delta$.}
\label{tab:main}
\end{table}

%% file: sections/tables/tab2_ablation.tex
\providecommand{\drop}[1]{\textcolor{tabneg}{\,$_{\downarrow#1}$}}
\definecolor{tabneg}{HTML}{B00020}
\centering\small
\setlength{\aboverulesep}{1.2pt}\setlength{\belowrulesep}{1.2pt}
\setlength{\tabcolsep}{8.0pt}
\begin{tabular}{@{}lcccc@{}}
\toprule
Variant & ITT\,$\uparrow$ & Cond.\,$\uparrow$ & Spend & $\Delta$ITT \\
\midrule
\textbf{Full \sys} & \textbf{71.3} & \textbf{61.3} & 10.3\% & -- \\
\midrule
w/o deadband gate & 67.8 & 58.3 & 10.8\% & \myredcell{12}\drop{3.5} \\
coverage sum $\to$ max & 68.1 & 58.5 & 10.7\% & \myredcell{11}\drop{3.2} \\
index: belief only & 66.0 & 57.1 & 11.0\% & \myredcell{19}\drop{5.3} \\
index: age only & 64.7 & 57.0 & 11.0\% & \myredcell{23}\drop{6.6} \\
$\mu_a{:=}0$ (no regime return) & 66.3 & 57.9 & 11.0\% & \myredcell{18}\drop{5.0} \\
uncalibrated belief & 58.3 & 52.6 & 10.8\% & \myredcell{45}\drop{13.0} \\
comparator gated in abeyance & 68.2 & 58.2 & 10.7\% & \myredcell{11}\drop{3.1} \\
\bottomrule
\end{tabular}
\caption{\textbf{Every component pays its way; belief calibration pays most.} Single-component ablations at $b{=}12$: three seeds, every variant paired with the full arm of its own wave. $\Delta$ITT: drop versus full \sys, shaded by size.}
\label{tab:ablation}

%% file: sections/tables/tab3_overhead.tex
\providecommand{\dropc}[1]{\textcolor{tabneg}{\,$_{\uparrow#1}$}}
\definecolor{tabneg}{HTML}{B00020}
\centering\small
\setlength{\aboverulesep}{1.2pt}\setlength{\belowrulesep}{1.2pt}
\setlength{\tabcolsep}{6.0pt}
\begin{tabular}{@{}lcccccc@{}}
\toprule
Policy & \multicolumn{2}{c}{Tokens/step} & LLM calls & Sched.\ & Tokens & $\Delta$calls \\
\cmidrule(lr){2-3}
 & prompt & gen. & /step & ms/step & /errand & \\
\midrule
\textbf{\sys ($\hat{\nu}_0$)} & \textbf{431} & \textbf{64} & \textbf{1.10} & \textbf{40} & \textbf{361} & -- \\
\midrule
Eager ($\nu{\to}0$) & 455 & 72 & 1.40 & 33 & 334 & \myredcell{30}\dropc{0.30} \\
Fixed-cadence & 417 & 60 & 1.53 & 30 & 355 & \myredcell{44}\dropc{0.44} \\
Equal-random & 426 & 62 & 1.44 & 29 & 361 & \myredcell{35}\dropc{0.35} \\
\bottomrule
\end{tabular}
\caption{\textbf{Scheduling never spends a model call.} Per-step accounting at $b{=}12$. The scheduler is closed-form arithmetic ($40$\,ms/step, no LLM invocation); $\Delta$calls: extra LLM calls per step versus \sys, shaded by size, whose targeted errands replace repeated failure recovery.}
\label{tab:overhead}

%% file: sections/5_conclusion.tex
\section{Conclusion}
\looseness=-1
An agent should pay to recheck a belief when resolving that doubt is
worth more than the task step it displaces, and not before. We presented
\sys{}: drift-aware beliefs order the doubt, the deadband gate funds only
value that clears the wage, en-route receipts maintain the task path for
free, and repair is versioned. Priced errands buy the frontier, the
prices are calibrated rather than lucky, and they track the world, not
the calendar. One boundary is the substrate: knowledge arrives as a
handed-over briefing; upkeep of self-consolidated experience is untested.
Future
work prices acquisition, consolidation, and maintenance in one auction, so
the wage that funds an errand decides what deserves to be knowledge at
all.

%% file: sections/6_appendix.tex
\clearpage

\section*{Table of Contents}
\setcounter{tocdepth}{2}
\renewcommand{\contentsname}{Appendix Contents}
\startcontents[appendix]
\printcontents[appendix]{}{1}{}

\counterwithin{table}{section}
\counterwithin{figure}{section}
\counterwithin{equation}{section}
\renewcommand{\theHtable}{app.\thesection.\arabic{table}}
\renewcommand{\theHfigure}{app.\thesection.\arabic{figure}}
\renewcommand{\theHequation}{app.\thesection.\arabic{equation}}

\counterwithin{proposition}{section}
\theoremstyle{plain}
\newtheorem{assumption}{Assumption}[section]
\theoremstyle{definition}
\newtheorem{remark}{Remark}[section]

\clearpage
\input{sections/appendix/app_a_algorithms}
\clearpage
\input{sections/appendix/app_b_closedforms}
\clearpage
\input{sections/appendix/app_c_prereg}
\clearpage
\input{sections/appendix/app_d_details}
\clearpage
\input{sections/appendix/app_e_results}
\clearpage
\input{sections/appendix/app_f_instruments}
\clearpage
\input{sections/appendix/app_g_cases}
\clearpage
\input{sections/appendix/app_h_repro}

\stopcontents[appendix]

%% file: sections/appendix/app_a_algorithms.tex
\section{The \sys{} Protocol: Algorithms and Configuration}
\label{app:algorithms}

The two routines below are the complete executable form of the scheduler.
Algorithm~\ref{alg:step} is one deployment step, run at every step of the
horizon; Algorithm~\ref{alg:life} is the lifecycle rule set it invokes when
evidence arrives. Notation follows Section~\ref{sec:method} and
Appendix~\ref{app:forms}; Table~\ref{tab:config} lists every protocol
constant with its preregistered default. Neither routine touches the frozen
policy, the retriever, or the store's contents at rest: the scheduler owns
only the errand actions and the belief state.

\begin{algorithm}[h]
\caption{\sys{} One Deployment Step
(\colorbox{bandblue}{Harvest}, \colorbox{bandgreen}{Believe},
\colorbox{bandorange}{Price and Gate}, \colorbox{bandpurple}{Settle}, and
\colorbox{bandred}{Meter})}
\label{alg:step}
\textbf{Input:} briefing $M$ with beliefs $\{q_a\}$, wage $\wage$, cap $b$,
retrieval width $k$
\begin{algorithmic}[1]
\STATE \textit{Phase I: Harvest} \Comment{receipts are free and never gated}
\colorbox{bandblue}{
\parbox{0.9\linewidth}{
\STATE execute the step's task action and observe the outcome
\STATE on each atom $a$ the step traversed, the comparator matches the
observation against the recorded value, in every lifecycle state
\STATE a confirmation re-anchors $q_a$ low and counts a receipt into
$N_i$; a refutation is a resolved flip and invokes
Algorithm~\ref{alg:life}
}}

\STATE \textit{Phase II: Believe} \Comment{closed form; no observation spent}
\colorbox{bandgreen}{
\parbox{0.9\linewidth}{
\STATE relax every unobserved belief by \eqref{eq:relax},
$q_a \gets q_a^{\infty} + \left(q_a - q_a^{\infty}\right) e^{-\kappa_a \Delta}$
\STATE compose $q_i = 1 - \prod_{a \in A_i}\left(1 - q_a\right)$; refresh
the durability horizon $\tau_i$ by \eqref{eq:tau}
}}

\STATE \textit{Phase III: Price and gate} \Comment{one comparison, two readings}
\colorbox{bandorange}{
\parbox{0.9\linewidth}{
\FORALL{retrieved candidates $i$ at sites $p$}
\STATE $V \gets \mathrm{VOI}_i\!\left(q_i\right) \hat{u}_i
\min\{\tau_i,\, T - t\}\, \xi\!\left(p,i\right)$; set $c$ to the cheapest
sufficient grade
\ENDFOR
\STATE if $\max_i V/c < \wage$ or the cap binds, log the clause in
\texttt{no\_spend\_reason}; go to Phase~V
\STATE fund the top candidate; extend to co-located items greedily while
each marginal member clears $\wage$
}}

\STATE \textit{Phase IV: Settle} \Comment{the errand resolves an atom}
\colorbox{bandpurple}{
\parbox{0.9\linewidth}{
\STATE run the errand; its receipt confirms or supersedes via
Algorithm~\ref{alg:life}
\STATE stamp the resolve-by date and the revisit gap $\Delta_{\min}$ on
deferred items
}}

\STATE \textit{Phase V: Meter} \Comment{the budget speaks}
\colorbox{bandred}{
\parbox{0.9\linewidth}{
\STATE update the wage by its dual rule at rate $\eta$: up while the cap
binds, decay toward the intrinsic floor while slack
}}
\RETURN updated beliefs, wage, and lifecycle states
\end{algorithmic}
\end{algorithm}

\begin{algorithm}[h]
\caption{Versioned Lifecycle, invoked on evidence about item $i$
(\colorbox{bandpurple}{Repair} and \colorbox{bandgreen}{Redemption})}
\label{alg:life}
\textbf{Input:} a confirming or refuting receipt, free or purchased
\begin{algorithmic}[1]
\STATE \textit{Phase I: Repair} \Comment{a state change, not a deletion}
\colorbox{bandpurple}{
\parbox{0.9\linewidth}{
\STATE on a refutation in service, supersede the recorded value in place,
version the prior value, and restart $q_i$ near certainty
\STATE when $q_i$ crosses the threshold $q_i^{\ast}$, move $i$ to
abeyance: retrieval withholds it and its uses stop, while the comparator
keeps reading every traversal
}}

\STATE \textit{Phase II: Redemption} \Comment{return costs a traversal and a lookup}
\colorbox{bandgreen}{
\parbox{0.9\linewidth}{
\STATE a free receipt that confirms an abeyant item returns it to service
at zero budget
\STATE a funded errand redeems it through the $g$-branch of
$\mathrm{VOI}_i$, the same auction as detection
\STATE a superseded value that recurs costs a version lookup, not
relearning
}}
\RETURN the item's lifecycle state
\end{algorithmic}
\end{algorithm}

Three reading notes. First, everything both routines compute is closed-form
arithmetic over the $k$ retrieved candidates, $O(k)$ per step; scheduling
never spends a model call, and the full per-step price is itemized in
Table~\ref{tab:aovh}. Second, the comparator loop of \textsc{Harvest} runs
before any pricing and is gated by nothing, including abeyance: this is the
algorithmic location of the contract clause that suspension closes use,
never eyes. Third, the gate never declines silently; every no-spend step
logs which clause held, so the deadband of the main text is a logged object,
tabulated in Table~\ref{tab:ansr} and replayed line by line in
Appendix~\ref{app:cases}.

\begin{table}[h]
\centering
\footnotesize
\setlength{\tabcolsep}{4.0pt}
\caption{Protocol constants with preregistered defaults.}
\label{tab:config}
\begin{tabular}{llll}
\toprule
\textbf{Symbol} & \textbf{Role} & \textbf{Default} & \textbf{Source} \\
\midrule
$T$ & horizon (steps) & $2{,}000$; $500$-step prefix re-read & instrument manifest \\
$W$ & warm-up excluded from every metric & $300$ steps & instrument manifest \\
$b$ & errand-action cap per hundred steps & $\{6, 12, 24\}$; dense $\{2, 3\}$ & preregistered axis \\
$c$ & errand cost (actions) & $d + 1$ for site distance $d$ & audited per atom \\
$(\lambda_a, \mu_a)$ & flip-out and flip-back intensity, group A & $(0.006,\ 0.020)$ & drift schedule \\
 & groups B, C & $(0.002,\ 0.0025)$ & drift schedule \\
 & group D & $(0.004,\ 0.006)$ & drift schedule \\
$k$ & retrieval width & release-pinned & code supplement \\
$\eta$ & wage dual-update rate & release-pinned & code supplement \\
$q_i^{\ast}$ & disposition threshold & calibrated at $R1'$, release-pinned & code supplement \\
& seeds & $\{42, \ldots, 46\}$; high tier extends to $11$ & Appendix~\ref{app:power} \\
& backbone & frozen Qwen3-8B; 4B sibling as floor & Appendix~\ref{app:details} \\
\bottomrule
\end{tabular}
\end{table}

%% file: sections/appendix/app_b_closedforms.tex
\section{Closed Forms and Formal Statements}
\label{app:forms}

This appendix derives the four closed forms quoted in the main text, states
the two structural facts the scheduler leans on, and closes with a
literature audit of the modeling choice that separates this world model
from its neighbors. The analysis claims exactness only in the single-atom
regime that instrumentation deliberately makes typical; no theorem-level
novelty is claimed for the scheduling mathematics, which instantiates the
restless-bandit lineage \citep{Whittle1988JAP, NinoMora2026ARXIV}.

\subsection{Preliminaries and Notation}
\label{app:notation}

Table~\ref{tab:notat} collects the symbols; all are inherited from the main
text. Throughout, one item $i$ is fixed, grounded in atoms $a \in A_i$;
the single-atom statements take $A_i = \{a\}$ and drop the subscript where
unambiguous.

\begin{table}[h]
\centering
\small
\caption{Notation used in this appendix.}
\label{tab:notat}
\begin{tabularx}{\textwidth}{lX lX}
\toprule
\textbf{Symbol} & \textbf{Meaning} & \textbf{Symbol} & \textbf{Meaning} \\
\midrule
$\lambda_a, \mu_a$ & flip-out and flip-back intensity & $g_i$ & forgone gain of a suspended item \\
$\kappa_a$ & total flip intensity $\lambda_a + \mu_a$ & $\ell_i$ & misuse loss of a served stale item \\
$q_a$ & staleness belief & $N_i$ & usage receipts of item $i$ \\
$q_a^{\infty}$ & stationary suspicion $\lambda_a / \kappa_a$ & $\hat{u}_i$ & usage rate \\
$q_i^{\ast}$ & disposition threshold & $\tau_i$ & durability horizon \\
$\wage$ & wage (shadow price of the budget) & $\xi(p,i)$ & locality discount, in $(0,1]$ \\
$c$ & errand cost in actions & $b$ & errand cap per hundred steps \\
$T, t$ & horizon and current step & $\mathcal{D}_i$ & deadband of item $i$ \\
\bottomrule
\end{tabularx}
\end{table}

\begin{assumption}[Two-state world]
\label{asm:world}
Each atom flips between its recorded value and its negation as a two-state
Markov chain with intensities $\lambda_a$ (out) and $\mu_a > 0$ (back),
independent across atoms as declared; the correlated-flip suite of
Appendix~\ref{app:results} stresses the declaration.
\end{assumption}

\begin{assumption}[Evidence]
\label{asm:evidence}
Beliefs move only on receipts: en-route observations and funded errands.
Between receipts no information arrives about the atom.
\end{assumption}

\subsection{Belief Relaxation}
\label{app:relax}

\begin{proposition}[Closed-form relaxation]
\label{prop:relax}
Under Assumptions~\ref{asm:world} and \ref{asm:evidence}, the staleness
belief obeys Eq.~\eqref{eq:relax}: $q_a(t + \Delta) = q_a^{\infty} +
(q_a(t) - q_a^{\infty})\, e^{-\kappa_a \Delta}$.
\end{proposition}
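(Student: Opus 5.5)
The plan is to read $q_a(t)$ as the conditional probability, given all receipts up to time $t$, that the atom's chain sits in the ``flipped'' state. Then show that between receipts this probability evolves exactly as the marginal law of the two-state continuous-time Markov chain, and solve the resulting linear ODE. Assumption~\ref{asm:evidence} is what makes the first step work. On an interval $[t, t+\Delta]$ with no receipt, no observation arrives, so the filtering distribution carries no likelihood update; Bayes' rule with a trivial likelihood is the identity. The conditional law at $t+\Delta$ is therefore the conditional law at $t$ pushed forward by the chain's transition kernel $P(\Delta)$. Independence across atoms (Assumption~\ref{asm:world}) lets us treat each atom in isolation, since receipts about other atoms carry no information about $a$.

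Next I write the forward Kolmogorov equation for the two states, recorded (R) and flipped (F). With generator rows $(-\lambda_a, \lambda_a)$ from R and $(\mu_a, -\mu_a)$ from F, the flipped-state probability satisfies
\[
\dot q_a(s) \;=\; \lambda_a\bigl(1 - q_a(s)\bigr) - \mu_a\, q_a(s) \;=\; -\kappa_a\bigl(q_a(s) - q_a^{\infty}\bigr),
\]
with $\kappa_a = \lambda_a + \mu_a$ and $q_a^{\infty} = \lambda_a/\kappa_a$. This is a scalar linear ODE. Setting $x(s) = q_a(s) - q_a^{\infty}$ gives $\dot x = -\kappa_a x$, so $x(t+\Delta) = x(t)e^{-\kappa_a\Delta}$, which is Eq.~\eqref{eq:relax}. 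As an equivalent check, one can diagonalize the generator: its eigenvalues are $0$ and $-\kappa_a$, and the stationary vector is $(\mu_a, \lambda_a)/\kappa_a$. Then $P(\Delta)$ equals the stationary projection plus $e^{-\kappa_a\Delta}$ times the complementary projection. Applying $P(\Delta)$ to $(1-q_a(t), q_a(t))$ yields the same formula.

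The remaining points are bookkeeping. The hypothesis $\mu_a > 0$ guarantees $\kappa_a > 0$, so $q_a^{\infty}$ is well defined and the relaxation is a genuine contraction toward it. The formula also composes consistently over consecutive receipt-free intervals (the semigroup property $e^{-\kappa_a\Delta_1}e^{-\kappa_a\Delta_2}=e^{-\kappa_a(\Delta_1+\Delta_2)}$), so it does not depend on how the agent discretizes steps.

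I expect the main obstacle to be the first step, not the calculus. The claim that ``no receipt'' means ``no information'' needs care, because the \emph{absence} of a receipt could itself be informative if receipt timing depended on the atom's state. For example, traffic might avoid a site because a path closed. I would handle this by reading Assumption~\ref{asm:evidence} as saying the receipt process is independent of the atom's hidden state between receipts, so non-arrival has a state-independent likelihood and cancels in Bayes' rule. I would state this interpretation explicitly, since the proposition is exact only under it.
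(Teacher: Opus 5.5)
Your proposal is correct and takes essentially the paper's approach: treat the between-receipt belief as the flipped-state occupation probability, write the forward equation $\dot q_a = \lambda_a - \kappa_a q_a$, and solve the linear ODE toward $q_a^{\infty}$. Your extra care in justifying why the filter reduces to the forward equation, namely that a missing receipt is uninformative, is a step the paper's one-line proof takes for granted from Assumption~\ref{asm:evidence}.
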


\begin{proof}
Between receipts the belief is the occupation probability of the flipped
state, which solves the forward equation $\dot{q}_a = \lambda_a (1 - q_a)
- \mu_a q_a = \lambda_a - \kappa_a q_a$. This linear equation has the
unique fixed point $q_a^{\infty} = \lambda_a / \kappa_a$ and relaxes to it
exponentially at rate $\kappa_a$, which is the stated form.
\end{proof}

\begin{remark}
The corpus default $\mu_a = 0$ collapses $q_a^{\infty}$ to one: every atom
is eventually presumed dead and no suspension is ever worth revisiting.
Every quantity downstream, the stationary suspicion, the durability
horizon, and the redemption branch of the index, inherits its finiteness
from $\mu_a > 0$. Appendix~\ref{app:audit} audits this choice against the
literature.
\end{remark}

\subsection{The Single-Peaked Index and the Deadband Width}
\label{app:voi}

\begin{proposition}[Single peak]
\label{prop:voi}
$\mathrm{VOI}_i(q) = \min\{q \ell_i,\ (1 - q) g_i\}$ vanishes at $q = 0$
and $q = 1$, is concave and piecewise linear, and peaks at
$q^{\dagger} = g_i / (g_i + \ell_i)$ with value
$g_i \ell_i / (g_i + \ell_i)$.
\end{proposition}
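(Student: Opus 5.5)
The argument needs only the positivity $g_i, \ell_i > 0$, which the single-peak claim implicitly assumes. I would treat $\mathrm{VOI}_i$ as the pointwise minimum of the two affine functions $f(q) = q\ell_i$ and $h(q) = (1-q)g_i$ on $[0,1]$, and read every claim off this structure.

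\textbf{Endpoints.} At $q=0$ we have $f(0)=0$ and $h(0)=g_i \ge 0$, so the minimum is $0$. Symmetrically, $h(1)=0$ and $f(1)=\ell_i\ge 0$, so $\mathrm{VOI}_i(1)=0$.

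\textbf{Concavity and piecewise linearity.} Each of $f$ and $h$ is affine, hence concave. The pointwise minimum of concave functions is concave, which follows directly from the definition of concavity. It is piecewise linear with at most one breakpoint, namely where $f$ and $h$ cross.

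\textbf{Peak.} Since $f$ is nondecreasing and $h$ is nonincreasing, the minimum equals $f$ to the left of the crossing and $h$ to the right. Hence it increases and then decreases, and its maximum sits at the crossing. Solving $q\ell_i = (1-q)g_i$ gives $q^{\dagger} = g_i/(g_i+\ell_i)$, which lies in $(0,1)$. The peak value is $f(q^{\dagger}) = g_i\ell_i/(g_i+\ell_i)$. For uniqueness of the maximizer, which is the sense of ``single-peaked,'' we need $f$ strictly increasing and $h$ strictly decreasing, and this holds because $\ell_i, g_i > 0$.

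The only delicate point is this positivity assumption. In the degenerate cases $g_i = 0$ or $\ell_i = 0$, the function is identically zero and the peak formula degenerates, so the statement should be read with $g_i, \ell_i > 0$.
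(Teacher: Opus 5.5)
Your proof is correct and follows the same route as the paper: treat $\mathrm{VOI}_i$ as the minimum of a rising and a falling affine branch, get concavity because a minimum of affine functions is concave, and locate the peak by solving $q\ell_i = (1-q)g_i$. Stating $g_i,\ell_i>0$ explicitly and noting that it makes the maximizer unique are small refinements the paper leaves implicit.
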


\begin{proof}
The first branch increases linearly from zero, the second decreases
linearly to zero, and the minimum of two affine functions is concave. The
branches cross where $q \ell_i = (1 - q) g_i$, giving $q^{\dagger}$ and,
by substitution, the peak value.
\end{proof}

\begin{proposition}[Deadband width, Eq.~\eqref{eq:band}]
\label{prop:band}
Let a check on a single-atom item resolve its doubt at cost $c$, and let
the gate fund it when $\mathrm{VOI}_i(q)\, N_i \ge \wage c$. The unfunded
set of beliefs is a union of two intervals at the ends of $[0,1]$ with
total width
$\lvert \mathcal{D}_i \rvert = \wage c\, (g_i + \ell_i) / (g_i \ell_i N_i)$,
provided the peak clears the wage.
\end{proposition}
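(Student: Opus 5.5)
The plan is to reduce the gate condition to a threshold on the single-peaked index of Proposition~\ref{prop:voi} and then read off the two unfunded tails. Set the effective threshold $\theta = \wage c / N_i$. Since $N_i > 0$, the funding rule $\mathrm{VOI}_i(q)\, N_i \ge \wage c$ is equivalent to $\min\{q\ell_i,\ (1-q) g_i\} \ge \theta$. A minimum clears $\theta$ exactly when both of its arguments do, so the funded set is the intersection of two half-lines: $q \ge \theta/\ell_i$ from the misuse branch, and $q \le 1 - \theta/g_i$ from the redemption branch.

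Next I would use the proviso that the peak clears the wage. By Proposition~\ref{prop:voi} the peak value is $g_i\ell_i/(g_i+\ell_i)$, so the hypothesis reads $g_i\ell_i/(g_i+\ell_i) \ge \theta$. Rearranging gives $\theta/\ell_i + \theta/g_i \le 1$, which is the same as $\theta/\ell_i \le 1 - \theta/g_i$. The funded set is therefore the nonempty closed interval $[\theta/\ell_i,\ 1-\theta/g_i]$, which contains $q^{\dagger}$ and lies inside $[0,1]$ because $\theta \ge 0$. Its complement in $[0,1]$ is the deadband $\mathcal{D}_i = [0, \theta/\ell_i) \cup (1-\theta/g_i, 1]$. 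These are two intervals anchored at the two ends of $[0,1]$, and they are disjoint by the inequality just derived.

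The width is then additive: $\theta/\ell_i + \theta/g_i = \theta (g_i+\ell_i)/(g_i\ell_i)$. Substituting $\theta = \wage c/N_i$ gives exactly Eq.~\eqref{eq:band}.

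The only delicate point is the role of the proviso, and I would isolate it in a closing remark. If the peak falls below $\theta$, the two tails overlap, the whole of $[0,1]$ is unfunded, and the formula would overstate the width, since it exceeds one. Boundary conventions (strict versus non-strict inequalities at the endpoints) change the set only on a measure-zero set, so they do not affect the width. I expect no real obstacle beyond stating this case split cleanly.
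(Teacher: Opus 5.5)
Your proof is correct and takes essentially the same approach as the paper: both read off the two unfunded end intervals, of widths $\wage c/(\ell_i N_i)$ and $\wage c/(g_i N_i)$, from the rising and falling branches of the minimum, sum them, and invoke the peak-clears-the-wage proviso to ensure these are the only declines. Your version also states explicitly what the paper leaves implicit: the proviso is equivalent to $\theta/\ell_i \le 1-\theta/g_i$, so the two tails are disjoint and the funded set is the nonempty interval between them.
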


\begin{proof}
On the rising branch the gate declines while $q \ell_i N_i < \wage c$,
an interval of width $\wage c / (\ell_i N_i)$ at the fresh end. On the
falling branch it declines while $(1 - q) g_i N_i < \wage c$, an interval
of width $\wage c / (g_i N_i)$ at the stale end. When the peak clears the
wage these are the only declines, and their widths sum to the stated
expression.
\end{proof}

\begin{remark}
The deadband is not a band around the peak: it is the two certainty ends.
An item almost surely fresh and an item almost surely stale are both
priced at zero, which is the sentence ``certainty is free at both ends''
in closed form. Costlier checks ($c$) widen both ends; heavier use
($N_i$) narrows them.
\end{remark}

\subsection{The Durability Horizon}
\label{app:tau}

\begin{proposition}[Re-crossing time, Eq.~\eqref{eq:tau}]
\label{prop:tau}
After a resolution restarts the belief at $q_i$, the time until it crosses
the disposition threshold $q_i^{\ast}$ is
$\tau_i = \kappa_i^{-1} \ln \bigl( \lvert q_i^{\infty} - q_i \rvert /
\lvert q_i^{\infty} - q_i^{\ast} \rvert \bigr)$
when $q_i^{\ast}$ lies between $q_i$ and $q_i^{\infty}$, and
$\tau_i = \infty$ otherwise.
\end{proposition}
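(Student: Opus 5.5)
The plan is to solve the relaxation ODE explicitly, using Proposition~\ref{prop:relax}, and invert it for the hitting time. After a resolution at time $0$ the belief restarts at $q_i$ and, with no receipts arriving (Assumption~\ref{asm:evidence}), evolves in the single-atom regime as
\[
q_i(\Delta) = q_i^{\infty} + (q_i - q_i^{\infty})\, e^{-\kappa_i \Delta}.
\]
The key structural fact is that this path is strictly monotone in $\Delta$ when $q_i \neq q_i^{\infty}$: it moves from $q_i$ toward $q_i^{\infty}$, never reaches it in finite time, and never overshoots. Equivalently, the signed distance $q_i(\Delta) - q_i^{\infty}$ keeps the sign of $q_i - q_i^{\infty}$, and its absolute value is $\lvert q_i - q_i^{\infty}\rvert e^{-\kappa_i\Delta}$, strictly decreasing.

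The proof then splits into two cases.

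\textbf{Case 1: $q_i^{\ast}$ lies strictly between $q_i$ and $q_i^{\infty}$.} Then $q_i^{\ast} - q_i^{\infty}$ has the same sign as $q_i - q_i^{\infty}$, with $0 < \lvert q_i^{\infty} - q_i^{\ast}\rvert < \lvert q_i^{\infty} - q_i\rvert$. By monotonicity and continuity, the intermediate value theorem gives a unique crossing time. Setting
\[
\lvert q_i - q_i^{\infty}\rvert e^{-\kappa_i \tau} = \lvert q_i^{\ast} - q_i^{\infty}\rvert
\]
and taking logarithms yields the stated $\tau_i$. This value is positive because the ratio inside the logarithm exceeds one.

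\textbf{Case 2: otherwise.} If $q_i^{\ast}$ lies on the far side of $q_i^{\infty}$ or equals it, the path never reaches it, since $q_i^{\infty}$ is approached but never attained or passed. If $q_i^{\ast}$ lies behind $q_i$, the path only moves away from it. In both sub-cases no crossing occurs, so $\tau_i = \infty$.

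The main obstacle is boundary bookkeeping rather than algebra. The endpoint conventions need care: $q_i^{\ast} = q_i$ gives $\tau_i = 0$, which is consistent with the formula, and $q_i^{\ast} = q_i^{\infty}$ must be assigned to the $\infty$ case. The degenerate start $q_i = q_i^{\infty}$ also needs handling: there the path is constant, so $\tau_i = \infty$ unless $q_i^{\ast} = q_i$. Finally, I would note that $\kappa_i > 0$ follows from $\mu_a > 0$ (Assumption~\ref{asm:world}), so the division is well defined. The identification $\kappa_i = \kappa_a$ holds in the single-atom regime where the statement is exact.
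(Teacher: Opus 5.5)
Your proposal is correct and follows the paper's route. Like the paper, you invert the closed-form relaxation of Proposition~\ref{prop:relax}: the gap to $q_i^{\infty}$ contracts by $e^{-\kappa_i \Delta}$, and the monotone path never meets a threshold lying outside the interval between $q_i$ and $q_i^{\infty}$. Your monotonicity argument and your endpoint bookkeeping ($q_i^{\ast}=q_i$, $q_i^{\ast}=q_i^{\infty}$, $q_i=q_i^{\infty}$, and $\kappa_i>0$ from $\mu_a>0$) make explicit what the paper's one-line proof leaves implicit.
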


\begin{proof}
Solve Proposition~\ref{prop:relax} for the crossing time: the gap to the
stationary point contracts by $e^{-\kappa_i \Delta}$, so the threshold is
reached when the contraction factor equals the ratio of the two gaps,
giving the logarithm. If the threshold does not lie between the running
belief and $q_i^{\infty}$, the relaxation path never meets it.
\end{proof}

\begin{remark}[The $\tau$ ordering is a prediction]
Items whose stationary suspicion sits close to the threshold have
$\tau_i \to \infty$: their answers never expire by drift alone, and the
index, which multiplies by $\min\{\tau_i, T - t\}$, buys them first. The
fluid surrogate $1/\kappa_i$ cannot express this, which is why the
surrogate comparison in Appendix~\ref{app:results} is a preregistered
check rather than a convenience.
\end{remark}

\subsection{The Value of the Free Channel, Numerically}
\label{app:numerics}

The independence of the two evidence channels is not assumed; it is
priced. A two-state value iteration (exact on the closed two-state chain;
no grid) evaluates the stationary value of three evidence policies under
the same wage: no receipts at all, use receipts only, and use plus
en-route receipts. Table~\ref{tab:bnum} reports the fixed points: the
ladder is monotone at every $(\kappa, c)$ cell, pooling to
$\ApNumLadder$, and the en-route increment is invariant to the errand cost
$c$ while the purchased channel scales with it, because passage is free by
construction. This is the modeling value of never gating the comparator,
measured before any deployment run; the deployment-side twin is the gated
comparator ablation of Table~\ref{tab:aabl}.

\begin{table}[h]
\centering
\small
\caption{Stationary value of three evidence policies (two-state value
iteration).}
\label{tab:bnum}
\input{sections/appendix/tables/tab_b_numerics}
\end{table}

\subsection{Zero Shadow Price Outside the Binding Regime}
\label{app:shadow}

\begin{proposition}[Scope of the ordering premium]
\label{prop:shadow}
Consider the budget as the constraint of a linear relaxation of the
errand-selection problem, with $\wage$ its multiplier. If the cap does not
bind at the optimum, complementary slackness forces the multiplier to the
intrinsic floor, and any policy that funds every candidate clearing the
floor attains the optimum: the marginal value of ordering vanishes.
Ordering carries a strictly positive premium only where the budget binds
and some priced doubt goes unfunded.
\end{proposition}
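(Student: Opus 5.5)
I will set up the linear relaxation explicitly and read the statement off LP duality. Let $x_j \in [0,1]$ be the fractional funding of candidate errand $j$, with value $V_j$ (the index of Section~\ref{sec:method}) and cost $c_j$. The relaxation is
\[
\max_{x \in [0,1]^n} \sum_j V_j x_j - \nu_0 \sum_j c_j x_j \quad \text{s.t.} \quad \sum_j c_j x_j \le B,
\]
where $B$ is the cap and $\nu_0 \ge 0$ is the intrinsic floor. The floor enters as the per-action opportunity cost of diverting a task step, so an errand with $V_j/c_j < \nu_0$ is unprofitable even when it is free of the cap. The term $\nu_0 \sum_j c_j x_j$ carries the floor; this is the reading of the wage in the subsection on the wage and the meter.

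Dualizing only the budget constraint with multiplier $\theta \ge 0$ gives
\[
L(\theta) = \theta B + \sum_j \max\{0,\ V_j - (\nu_0 + \theta) c_j\}.
\]
The inner maximization is solved by the threshold rule $x_j = 1$ exactly when $V_j/c_j \ge \nu_0 + \theta$. So I identify $\wage = \nu_0 + \theta$. Strong duality holds for this bounded feasible LP, which means the optimum is attained by a threshold rule at the optimal $\theta^\ast$, with at most one fractional item at the boundary. This is the fractional-knapsack structure.

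\textbf{Non-binding case.} Suppose $\sum_j c_j x^\ast_j < B$. Complementary slackness gives $\theta^\ast = 0$, so $\wage = \nu_0$. The primal optimum is then $x_j = \mathbf{1}[V_j/c_j \ge \nu_0]$, up to ties, which contribute zero. Now take any policy that funds every candidate clearing the floor, in any order. If it does not exceed $B$, it selects exactly this set and attains the same objective. The order in which it funds them is irrelevant, because the objective is additive and no constraint truncates the set. The marginal value of ordering therefore vanishes.

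\textbf{Binding case.} Here I exhibit the premium. Suppose some priced doubt goes unfunded, meaning some candidate has $V_j/c_j > \nu_0$ but is excluded, which forces $\theta^\ast > 0$. Take a non-ratio ordering that, under the same budget $B$, funds an item with a lower ratio in place of one with a higher ratio. Exchanging $\epsilon$ units of cost between the two changes the objective by $\epsilon(V_h/c_h - V_l/c_l)$. This quantity is strictly positive whenever the ratios differ, so the ordering premium is strictly positive. When all unfunded positive-margin items share the same ratio, the premium can be zero. I would state the ``strictly positive'' clause as holding generically, or for some ordering, and add that caveat to the statement.

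\textbf{Main obstacle.} The hard step is modeling rather than algebra: I must justify that the deployed dual update converges to $\theta = 0$ when the cap is slack, so that $\wage$ settles at $\nu_0$. I would invoke the dual-update rule of Algorithm~\ref{alg:step}, Phase~V, which decays $\wage$ toward the floor whenever the cap is slack and is projected at $\nu_0$, and treat it as projected subgradient descent on $L(\theta)$. I would also keep the claim at the level of the relaxation, since the sequential, stochastic problem is only approximated by it.
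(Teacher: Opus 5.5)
Your proposal is correct and is the paper's own argument made explicit. You dualize the budget, let complementary slackness zero the multiplier so the wage $\wage=\nu_0+\theta$ settles at the floor (exactly what the paper's ``multiplier is zero above the floor'' means), and observe that the relaxation then decouples across candidates so order is irrelevant; your exchange step adds the binding-case positivity that the paper only asserts.

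The caveat and the dual-convergence ``obstacle'' are not needed. ``Only where'' is a necessary condition already delivered by the slack case: whenever every floor-clearing candidate fits, funding them all attains the budget-free bound $\sum_j \max\{0, V_j-\nu_0 c_j\}$, so order cannot matter. The proposition is also explicitly about the relaxation, not the deployed update. If you keep the caveat anyway, the degenerate case is all floor-clearing candidates sharing one ratio, not merely the unfunded ones.
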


\begin{proof}[Proof sketch]
With a slack cap the budget constraint is inactive, so its multiplier is
zero above the floor and the relaxation decouples across candidates: each
is funded if and only if its value per action clears the floor,
independently of the others. Any ordering of funded candidates yields the
same objective, so the premium of ordering over any other spend rule with
the same funded set is zero. When the cap binds, the funded set is a
strict subset of the floor-clearing set, and which subset is chosen is
exactly what ordering decides.
\end{proof}

\begin{remark}[Scope, stated plainly]
This is a statement about the relaxation, not a regret bound; we use it as
a preregistered scope condition, not a guarantee. Its two empirical
faces appear in Appendix~\ref{app:results}: on the binding layer of the
coverage grid the ordering premium grows monotonically with the coverage
gap, and on the non-binding layer it is indistinguishable from zero,
$\ApLifeboatNB$\,pp pooled. The deployment rule of the main text is this
proposition read aloud: pricing matters exactly where receipts are scarce
and the budget is tight.
\end{remark}

\subsection{The Flip-Back Audit}
\label{app:audit}

Table~\ref{tab:blam} audits how the memory-maintenance corpus and the two
adjacent classical lineages treat regime return, the $\mu_a$ axis of
Assumption~\ref{asm:world}. Within LLM agent memory, staleness is modeled
as a monotone process: one-way invalidation, frozen stores, or deletion on
contradiction. Versioning does exist in places, and three systems draw a
revival edge, a ``revalidatable'' or reinstatable state, but in each case
the edge is drawn without a trigger, a price, or a demonstration. The
classical refresh lineage prices recheck scheduling but in the
$\mu_a \to 0$ limit of Eq.~\eqref{eq:relax}; the restless-bandit line
models spontaneous recovery outright, for interventions rather than for
memory. What no audited system does is the combination this paper runs on:
model the return, version the repair, and make detection and redemption
compete for the same wage in one currency.

\begin{table}[h]
\centering
\footnotesize
\caption{How prior models treat flip-back. $^{\ast}$A revival edge is
stated but carries no trigger or price.}
\label{tab:blam}
\input{sections/appendix/tables/tab_b_lambda}
\end{table}

%% file: sections/appendix/tables/tab_b_numerics.tex
\setlength{\tabcolsep}{5.0pt}
\renewcommand{\arraystretch}{1.12}
\rowcolors{3}{white}{zebra}
\begin{tabular}{@{}lrrrrrr@{}}
\toprule
\textbf{Evidence channels} & \multicolumn{2}{c}{\cellcolor{grpteal}$\kappa{=}0.18$} & \multicolumn{2}{c}{\cellcolor{grpteal}$\kappa{=}0.4$} & \multicolumn{2}{c}{\cellcolor{grpteal}$\kappa{=}1.04$} \\
\cmidrule(lr){2-3}\cmidrule(lr){4-5}\cmidrule(lr){6-7}
 & $c{=}1$ & $c{=}5$ & $c{=}1$ & $c{=}5$ & $c{=}1$ & $c{=}5$ \\
\midrule
no receipts at all & 0.060 & 0.060 & 0.060 & 0.060 & 0.060 & 0.060 \\
use receipts only & 0.258 & 0.258 & 0.163 & 0.163 & 0.103 & 0.103 \\
\rowcolor{herobg} \textbf{use $+$ en-route receipts} & 0.407 & 0.358 & 0.259 & 0.194 & 0.140 & 0.061 \\
\bottomrule
\end{tabular}

%% file: sections/appendix/tables/tab_b_lambda.tex
\setlength{\tabcolsep}{3.5pt}
\renewcommand{\arraystretch}{1.15}
\rowcolors{2}{white}{zebra}
\begin{tabular}{@{}p{3.7cm}p{4.7cm}p{2.1cm}p{2.7cm}@{}}
\toprule
\textbf{System} & \textbf{Staleness model} & \textbf{Return ($\mu_a$)?} & \textbf{Repair} \\
\midrule
GLOVE \citep{Yin2026ARXIV} & reactive mismatch on re-encounter; no temporal model & no & delete and replace \\
STALE \citep{Chao2026ARXIV} & one-way LLM-adjudicated invalidation & no & archive; read-time block \\
AdaMEM \citep{Zhang2026ICML} & store frozen after construction & n.a. & regenerate a transient strategy \\
SkillGuard \citep{Fan2026ARXIV} & one-shot contract check at ingestion & no & one-shot rewrite \\
Feedback loops \citep{Cui2026ICMLW} & outcome bookkeeping; hand-set threshold & edge only$^{\ast}$ & deprecate; no reinstate \\
MemTX \citep{Li2026ARXIV} & declared validity interval, never re-evaluated & edge only$^{\ast}$ & typed cascade; supersede \\
Kumiho \citep{Park2026ARXIV} & graph-internal deprecation; no temporal model & edge only$^{\ast}$ & versioned supersede \\
Stale-memory audit \citep{Sun2026ARXIV} & injected flips, truth in prompt & no & delete-only filter \\
Mem0 \citep{Chhikara2025ARXIV} & write-time contradiction only & no & delete on contradiction \\
Letta \citep{Packer2023ARXIV} & consolidation and structure hygiene & n.a. & none \\
Zep \citep{Rasmussen2025ARXIV} & ingestion-triggered edge invalidation by timestamp & no & supersession by new edges \\
SafeCommit \citep{Akewar2026ARXIV} & per-decision conformal score, recomputed each step & n.a. & none: commit, probe, or fall back \\
MirrorCraft \citep{Gao2026ARXIV} & prior-mismatch benchmark; rules constant within an episode & n.a. & none: memory cleared across episodes \\
Supersede \citep{Patel2026ARXIV} & bounded notes field; a superseded value persists unless overwritten & no & overwrite only \\
StateAuditor \citep{Sun2026ARXIVb} & decay-based suspicion forbidden; monotone chronology gate & no & regenerate the response; memory read-only \\
Web refresh \citep{Cho2000SIGMOD,Cho2003TODS,BusaFekete2025ARXIV} & Poisson change; freshness $e^{-\lambda\tau}$ & no (the $\mu_a{\to}0$ limit) & scheduled re-crawl \\
Restless adherence \citep{NinoMora2026ARXIV} & two-state belief chain, spontaneous recovery & \textbf{yes} (not a memory system) & allocates interventions \\
\bottomrule
\end{tabular}

%% file: sections/appendix/app_c_prereg.tex
\section{Preregistered Analysis Plan and Power}
\label{app:power}

Every headline estimand in this paper was frozen as a named quantity with
an interpretation band and a fallback reading before the first full-wave
GPU hour, in the study's request document; the runs then filled the bands
mechanically. This appendix states the estimands, reports every band
against its outcome, and closes with the design-stage power forecast and
what the realized study delivered against it.

\subsection{Estimands and Inference}
\label{app:estimands}

All rates are computed on the post-warm-up window of each run. ITT is
success over all errand-relevant steps and is primary throughout. The
conditional caliber isolates the store's contribution and is emitted in
three forms: the \emph{judged-domain} rate, computed on the preregistered
domain of knowledge-bearing uses and pinned before any arm ran; a
\emph{full-domain audit} over all encounters; and a \emph{literal audit}
that scores exact value matches. Every table draws the judged rate; the
two audits ship with every deliverable, and a domain switch is permitted
only at the line where an estimator is computed. Two guards are mechanical:
the judged and ITT point estimates of any comparison must agree in sign,
since a mechanism that hides items rather than repairing them inflates the
conditional rate while ITT falls; and a conditional cell with fewer than
$30$ stale encounters reports not-available rather than a number.

Inference follows the pairing design. The two arms of every headline
comparison run on the same machine for each seed and cap, servers rotate
across seeds, and the seed is the unit of inference: intervals are $95\%$
paired $t$ over seed-level means, and the item-level matched pairs of the
coverage study are resolved by seed-clustered inference with a bootstrap
cross-check. Budget claims attach only to caps proven binding by logged
cap-hit events; comparisons on slack caps are labeled descriptive.

\subsection{Bands and Outcomes}
\label{app:bands}

Table~\ref{tab:cpre} reports all nine preregistered estimands. Seven land
inside their bands. Two exercised the fallback reading written for them in
advance, and we adopt those readings verbatim. The dominance count
resolves to $1/3$ under its strict three-leg criterion: at $b{=}6$ both
arms pin the cap, so measured spend differs by \ApDomSpendSix{}\,pp of
steps and the spend leg fails on that margin; at $b{=}24$ the conditional
caliber ties at \ApDomCondLoose{}\,pp. Dominance at the base cap holds on
all three legs, and the preregistered fallback scopes the strict claim to
that cap, where the main text anchors its headline margin. The calibration ladder clears
its upper segment decisively, \ApCalibSegTwo{} for pricing on calibrated
beliefs, while the lower segment, \ApCalibSegOne{}, is positive but not
resolved at three paired seeds; the fallback reads the ladder as a
single-sided claim (Figure~\ref{fig:apow}c), and the component view of
Table~\ref{tab:aabl}, where the uncalibrated-belief arm is the deepest cut
in the panel, carries the calibration story with the resolution the ladder
lacks.

\begin{table}[p]
\centering
\scriptsize
\caption{Preregistered interpretation bands and outcomes. Bands were
frozen before the full waves; verdicts are mechanical.}
\label{tab:cpre}
\input{sections/appendix/tables/tab_c_prereg}
\end{table}

\subsection{Power, Forecast and Realized}
\label{app:powerfc}

The matched-pair study was sized before it ran. From calibration rates,
the forecast fixed the pair yield and the half-width of the matched
conditional gap per tier and seed count, with an upgrade rule preregistered
at $480$ pairs; the realized high-tier cohort delivered \ApPairsRealized{}
pairs across eleven seeds, and the realized seed-clustered half-width,
\ApCondHalf{}\,pp, beats the forecast at the same row
(Table~\ref{tab:cpow}). The two cohorts the design nested are both hard:
the eight-seed gate cohort reads \ApCondEight{} and the eleven-seed
extension \ApCondMatched{}, same sign, overlapping intervals, so the
matched conditional gap is robust to the seed schedule. The forecast also
fixed the budget geometry: caps of $6$ and $12$ were predicted binding and
$24$ slack. The realized cap-hit ledger of Table~\ref{tab:afro} is
sharper than the forecast: at $6$ and $12$ every run of both arms
exhausts the cap, while at $24$ every eager run still does and no \sys{}
run ever does, so the loose cap is slack only for the policy that knows
when not to spend.

Figure~\ref{fig:apow} collects the design study beside its realized
readings.

\begin{table}[p]
\centering
\small
\caption{Design-stage pair-yield forecast and the realized cohort.}
\label{tab:cpow}
\input{sections/appendix/tables/tab_c_power}
\end{table}

\begin{figure}[p]
\centering
\begin{subfigure}{0.39\linewidth}
  \centering
  \includegraphics[width=\linewidth]{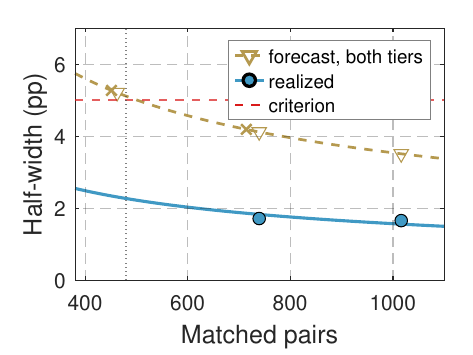}
  \caption{Half-width, forecast and realized}
\end{subfigure}\hspace{0.03\linewidth}%
\begin{subfigure}{0.39\linewidth}
  \centering
  \includegraphics[width=\linewidth]{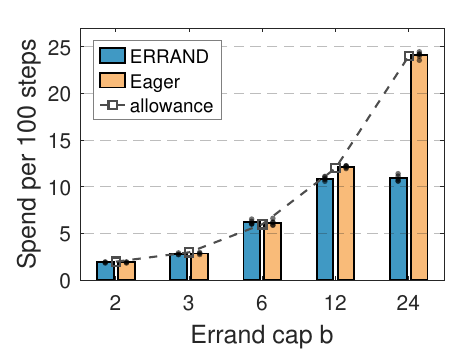}
  \caption{Spend against the allowance}
\end{subfigure}

\vspace{4pt}
\begin{subfigure}{0.39\linewidth}
  \centering
  \includegraphics[width=\linewidth]{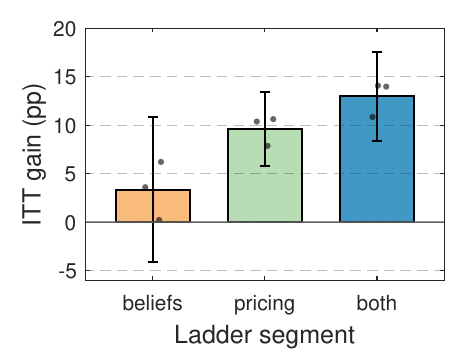}
  \caption{Ladder segments, paired}
\end{subfigure}\hspace{0.03\linewidth}%
\begin{subfigure}{0.39\linewidth}
  \centering
  \includegraphics[width=\linewidth]{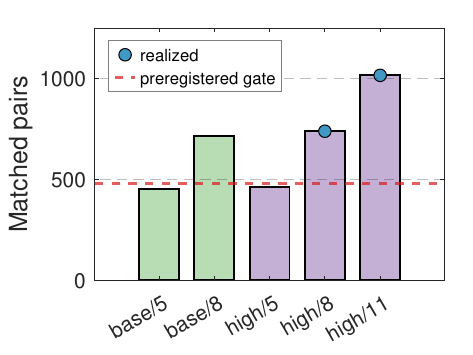}
  \caption{Pair yield against the gate}
\end{subfigure}
\caption{The design study, realized. (a)~The forecast is one $c/\sqrt{n}$
law, so both tiers share a curve; the realized cohorts halve its constant.
Dotted, the $480$-pair gate; dashed, the $5$\,pp criterion. (b)~Measured
spend against the allowance: both arms empty the purse until the cap stops
binding for \sys{} alone. (c)~The two ladder segments, paired by seed;
whiskers, $95\%$ paired $t$. (d)~Forecast pair yield against the gate
(dashed); dots, realized cohorts.}
\label{fig:apow}
\end{figure}

%% file: sections/appendix/tables/tab_c_prereg.tex
\setlength{\tabcolsep}{3.0pt}
\renewcommand{\arraystretch}{1.15}
\rowcolors{2}{white}{zebra}
\begin{tabular}{@{}p{2.0cm}p{2.6cm}p{1.2cm}p{2.5cm}p{3.2cm}p{1.2cm}@{}}
\toprule
\textbf{Estimand} & \textbf{Reads} & \textbf{Carrier} & \textbf{Preregistered band} & \textbf{Outcome} & \textbf{Verdict} \\
\midrule
\texttt{StaleCost} & stale collapse on the frozen arm & R1$'$ & $\ge 40$\,pp & $+79.5$\stdv{0.6}\,pp & in band \\
\texttt{FrontierGap} & ITT, \sys{} vs.\ eager @ $b{=}12$ & W1-3 & CI $> 0$ and $\ge {+}3$\,pp & $+10.0$\,pp [+8.6, +11.4] & in band \\
\texttt{SpendSave} & spend saved at the same cap & W1-3 & $\ge 5\%$, cond.\ within $1.5$\,pp & $10.3$\stdv{1.9}\% ($\Delta$cond $+1.8$\,pp [-0.8, +4.5]) & in band \\
\texttt{DominanceCell} & caps where \sys{} dominates eager & W1-3 & $\ge 2/3$ & $1/3$ & 1/3: single-cap claim \\
\texttt{CondMatched} & matched conditional gap, high tier & W1-5$+$W3-5 & CI $> 0$ at half-width $\le 5$\,pp & $+4.9$\,pp [+3.3, +6.6] & in band \\
\texttt{GapPrice} & ordering premium, high minus mid tier & W1-5$+$W1-3 & CI $> 0$ & $+3.8$\,pp [+1.8, +5.8] & in band \\
\texttt{CalibLadder} & blind $<$ eager $<$ \sys{} at equal spend & W2-1 & both segments CI $> 0$ & $+3.3$\,pp [-4.1, +10.8]; $+9.6$\,pp [+5.8, +13.4] & upper segment only \\
\texttt{PlateauFlat} & cond.\ $b{=}24$ vs.\ $b{=}12$; no extra spend & W1-3 & $\lvert\Delta\rvert \le 1.5$\,pp & $+0.0$\,pp [-2.5, +2.5]; spend $+0.0$\,pp & in band \\
\texttt{RevivePremium} & relearning cost of $\mu_a{:=}0$ & W2-1 & $> 0$, measurable & $+5.0$\,pp [+1.4, +8.6] & in band \\
\bottomrule
\end{tabular}

%% file: sections/appendix/tables/tab_c_power.tex
\setlength{\tabcolsep}{5.5pt}
\renewcommand{\arraystretch}{1.12}
\rowcolors{2}{white}{zebra}
\begin{tabular}{@{}llrrl@{}}
\toprule
\textbf{Tier} & \textbf{Seeds} & \textbf{Pairs (forecast)} & \textbf{Half-width (pp)} & \textbf{Gate at 480} \\
\midrule
mid & 5 & 451 & 5.27 & below 480 upgrade \\
mid & 8 & 714 & 4.19 & meets \\
high & 5 & 462 & 5.21 & below 480 upgrade \\
high & 8 & 739 & 4.12 & meets \\
high & 11 & 1016 & 3.51 & meets W3\mbox{-}5 \\
\midrule
\rowcolor{herobg} realized & 11 & \textbf{1016} & \textbf{1.66} & met \\
\bottomrule
\end{tabular}

%% file: sections/appendix/app_d_details.tex
\section{Environment, Instrumentation Principles, and Experimental Details}
\label{app:details}

This appendix records the preregistered configuration of the study: the
errand world and its drift, the briefing and the instrumentation
principles it enforces, the six pitfalls the evaluation design guards
against, the compared baselines, and the adjudication stack. Every
quantity here is a design constant fixed before the runs, except the
calibration readings of Table~\ref{tab:dgat}, which are the instrument's
own acceptance test; measured outcomes live in the main text and in
Appendix~\ref{app:results}.

\subsection{The Errand World and the Task Stream}
\label{app:world}

The main environment is a scripted tool-use world of sites, paths,
non-player characters, and flags, built on the AgentOdyssey harness with
custom step rules that drive drift (Appendix~\ref{app:repro}). A frozen
agent serves dispatch orders over $T = 2{,}000$ steps: \emph{go} orders,
whose completion requires reaching a site, and \emph{fetch} orders, whose
completion requires obtaining an object and delivering it to a drop point
or a character. The fetch form is load-bearing: completion conditions are
the only lever that puts an action verb into the agent's task graph, so
fetch orders are what make off-path pick-up and purchase atoms reachable
at all (Section~\ref{app:exo}). Order text names only what to get and
where to deliver it. Two preregistered dispatch tiers set the coverage
gap: the base tier issues $82$ orders per run (A$20$/B$16$/C$22$/D$16$/F$8$
across atom groups), the high tier $110$ with the off-path groups doubled
and the on-path groups halved (A$10$/B$8$/C$44$/D$32$/F$16$), holding the
exogenous share of the window near three-fifths in both. Every errand's
cost obeys the audited contract $c = d + 1$ for site distance $d$.

\subsection{Drift and the Roster}
\label{app:drift}

The briefing grounds in $26$ atoms in five groups spanning path
connectivity, object values, and access flags; one atom sits outside the
judged domain by a preregistered connectivity guard, leaving
\ApRosterAtoms{} judged. Atoms flip under the four intensity schedules of
Table~\ref{tab:config}, spanning on- and off-path sites at fast and slow
dwell; nothing in the stream announces a flip. Flip pressure is accounted
at the \emph{effective} rate, nominal intensity times one minus the
interception rate that path-locking imposes, and the world's topology
keeps interception bounded: each site cluster carries at most one
oscillating edge or at least two outer gates. Table~\ref{tab:dros}
summarizes the roster as the deployed agent actually meets it.

\begin{table}[h]
\centering
\small
\caption{Roster by atom group, per run on the \sys{} arm at the base cap.}
\label{tab:dros}
\input{sections/appendix/tables/tab_d_roster}
\end{table}

\subsection{The Briefing and the Substrate Principle}
\label{app:substrate}

The briefing is seeded exogenously: the same $26$ consolidated items are
handed to every arm at deployment, identically worded, rather than
accumulated from the agent's own trajectories. This is an instrumentation
principle, not a convenience. Knowledge a system acquires through its own
trajectories is approximately self-maintaining, because acquisition routes
and revisit routes coincide and en-route receipts refresh what the agent
already frequents. A trajectory-seeded store therefore couples an item's
existence to its coverage, and turning coverage down measures ignorance,
not staleness. Handed-over knowledge, briefings, runbooks, another agent's
consolidated experience, is the substrate on which maintenance economics
is even defined: existence decouples from traffic, staleness accumulates
silently off-path, and the first free receipt arrives bundled with the
first, possibly failing, use. Seeding is not a treatment: all arms receive
the same briefing, no receipt channel is weakened, and the substrate
itself must pass the gates of Table~\ref{tab:dgat} before any mechanism
is read.

\subsection{Three Layers of Exogeneity}
\label{app:exo}

For the world to be a valid instrument for staleness, three conditions
hold jointly, each supplied by the design and none by the mechanism under
test. \emph{Seeding}: items exist independently of the agent's history
(Section~\ref{app:substrate}). \emph{Dispatch}: tasks arrive as orders, so
the agent's presence at knowledge-bearing sites is set by the stream, not
self-proposed, and the coverage gap is a dial rather than an emergent
accident. \emph{Completion verbs}: order completion conditions pull the
required action verbs into the task graph, closing the instrumental gap in
which an agent reaches a site but never performs the verb the knowledge is
about. The third layer has a measured boundary: with fetch orders off,
off-path fetch-verb atoms record \ApClosureOff{} uses per order, and
turning the orders on releases them to \ApClosureOn{}
(Table~\ref{tab:aclo}), which is why instrumental closure enters the
regime map as its own axis. All three layers are arm-invariant: order
texts are whitelisted, carry no truth or freshness fields, and a hard
assertion checks the whitelist at the first step of every run.

\subsection{Six Pitfalls of Evaluating Memory Maintenance}
\label{app:pitfalls}

The evaluation design guards against six failure modes, each of which we
either met in calibration or constructed deliberately; stated as design
rules:

\begin{enumerate}[leftmargin=1.6em, itemsep=1pt]
\item \emph{Denominator migration.} A mechanism that hides bad items
inflates its own conditional rate by shrinking the denominator. The judged
domain is pinned before arms run, both audits ship with every deliverable,
and conditional and ITT point estimates must co-sign
(Section~\ref{app:estimands}).
\item \emph{Errands in the denominator.} Counting maintenance actions as
encounters makes the arms' denominators non-comparable; only task uses
enter the conditional denominator.
\item \emph{Pseudo-replication.} Item-level pairs inflate $n$ while the
independent unit is the seed; all matched inference is seed-clustered.
\item \emph{Ignorance--staleness degeneracy.} Trajectory-seeded stores
make coverage and existence one variable
(Section~\ref{app:substrate}); exogenous seeding is the necessary
instrument condition for the staleness construct.
\item \emph{Partisan accounting.} Instrumentation defects concentrate in
the stratum where the mechanism's value lives, off-path atoms, and a
global smoke test cannot see them because on-path traffic dominates it.
Accounting audits run stratified by the treatment axis before every wave
(Appendix~\ref{app:instruments}).
\item \emph{Endogenous gate metrics.} Using an outcome quantity as a
pre-run gate lets paid probing feed the gate; every pre-run gate reads a
constructive proxy that the dial sets and no policy can move.
\end{enumerate}

\subsection{Compared Baselines}
\label{app:baselines}

Seven deployed policies and five mechanism arms run on one substrate.
Two \emph{zero-spend anchors} bracket the memory question: the frozen
briefing with no maintenance, and a bigger store, the same briefing plus
twelve legacy items, with paired judging restricted to the shared
twenty-six so the comparison stays item-matched. Three \emph{unpriced
spenders} pay the same budget without prices: eager revalidation, which
funds the most suspect item whenever the cap allows; a fixed-cadence
recheck that scans two items per period on a clock; and equal-budget
random errands, which keep the gate's spend rate but choose targets
uniformly, the undirected-probing pattern of reactive-repair designs
\citep{Yin2026ARXIV}. Two \emph{priced ceilings} are labeled as such: an
oracle detect-and-filter that reads ground-truth staleness labels at zero
action cost and masks stale items at retrieval, the read-time--filter
family \citep{Sun2026ARXIV}, and a known-rate reference on the drift sweep
that receives the true $\kappa$. The drift sweep adds the \emph{literature
policy}, refresh cadences set from published half-life priors, the
field's default answer to staleness. Five \emph{mechanism arms} reuse the
full scheduler minus one component each, including a Whittle closed-form
allocator \citep{NinoMora2026ARXIV} and a myopic value-rate rule on the
allocator axis; all twelve are specified operationally in
Appendix~\ref{app:instruments} and measured in Appendix~\ref{app:results}.

\subsection{Adjudication and Serving}
\label{app:adjudication}

Ground truth is mechanical end to end. The harness's step rules emit an
event log of every atom flip, so truth at any step is a lookup; order
completion is a scripted predicate; and no LLM judges anything, matching
the scheduler's own zero-model-call discipline. The backbone is a frozen
Qwen3-8B served identically for every arm, with a 4B sibling for the
capability-floor axis. Two identical servers host all runs: the two arms
of every comparison run sequentially on the same machine for each seed and
cap, and seeds rotate across servers, so machine effects difference out.
Each run starts by writing a fingerprint of its instrument tree, dispatch
table, drift schedule, and seed; runs are comparable only under equal
fingerprints.

\subsection{Instrument Calibration}
\label{app:gates}

Before any mechanism arm ran, the frozen no-maintenance arm had to prove
the substrate on five gates per tier: enough distinct items in use, enough
stale encounters to judge, dwell long enough to matter, dispatch
compliance, and matched-pair yield. Table~\ref{tab:dgat} reports the
readings: all gates pass on both tiers. The same runs price the failure
mode the study is about: on items whose atoms have flipped, conditional
success collapses by $\ApStaleCost$\,pp against their fresh
complement, the entry every maintenance policy in this paper is buying
back.

\begin{table}[h]
\centering
\small
\caption{Substrate gates at calibration, against their thresholds.}
\label{tab:dgat}
\input{sections/appendix/tables/tab_d_gates}
\end{table}

%% file: sections/appendix/tables/tab_d_roster.tex
\setlength{\tabcolsep}{5.0pt}
\renewcommand{\arraystretch}{1.12}
\rowcolors{2}{white}{zebra}
\begin{tabular}{@{}lrrrrrr@{}}
\toprule
\textbf{Group} & \textbf{Atoms} & \textbf{$c$ range} & \textbf{Orders/run} & \textbf{Uses/run} & \textbf{Receipts/run} & \textbf{Stale share} \\
\midrule
A & 8 & 1--2 & 20.0 & 43.5 & 58.7 & 13.8\% \\
B & 8 & 2--3 & 16.0 & 98.5 & 182.7 & 15.3\% \\
C & 3 & 3--3 & 22.0 & 17.1 & 28.2 & 15.9\% \\
D & 2 & 4--5 & 14.0 & 10.5 & 25.8 & 18.2\% \\
F & 4 & 5--7 & 8.0 & 0.2 & 15.7 & 0.0\% \\
\bottomrule
\end{tabular}

%% file: sections/appendix/tables/tab_d_gates.tex
\setlength{\tabcolsep}{4.5pt}
\renewcommand{\arraystretch}{1.12}
\rowcolors{2}{white}{zebra}
\begin{tabular}{@{}llrrrrrrl@{}}
\toprule
\textbf{Tier} & \textbf{Gate} & B1 & B2 & B3 & B4a & B4b$'$ & B5 & \textbf{Verdict} \\
 & threshold & $\ge 12$ & $\ge 30$ & $\ge 150$ & $\ge 75\%$ & $\ge 2$ & $\ge 8$ & \\
\midrule
high & measured & 23.3\stdv{0.6} & 69.7\stdv{13.5} & 199\stdv{4} & 85.4\stdv{6.2} & 4.7\stdv{0.6} & 16.0\stdv{7.2} & PASS (3 seeds) \\
mid & measured & 22.2\stdv{1.5} & 77.0\stdv{7.6} & 202\stdv{13} & 85.6\stdv{3.3} & 5.4\stdv{0.9} & 19.8\stdv{3.8} & PASS (5 seeds) \\
\bottomrule
\end{tabular}

%% file: sections/appendix/app_e_results.tex
\section{Extended Results}
\label{app:results}

This appendix expands each measurement summarized in the main text into
its full per-setting, per-seed form. Unless a table states otherwise, the
anchor setting is the base dispatch tier at $b{=}12$ on the frozen 8B
backbone; every cell is computed by script from the delivered run atoms,
and $\pm$ is one standard deviation across runs. The tables follow the
order in which the paper introduces its instruments: the frontier and its
inaction ledger, the baseline grid, the coverage-gap ladder, ablations
and allocators, the coverage grid with its zero-shadow-price layer, the
drift envelope, correlated flips, the regime axes, the horizon, the
freshness panorama, and the overhead account.

\subsection{The Frontier at Every Cap}
\label{app:res-frontier}

\textbf{Priced errands hold the spend axis end to end, and only the
priced policy leaves the loose cap unspent.}
Table~\ref{tab:afro} is the full frontier grid, including the
deep-binding caps $b \in \{2, 3\}$, both uncapped endpoints, and each
arm's realized judged domain per run. \sys{}
clears eager revalidation on ITT at every cap and on the conditional
caliber at every binding one, and the cap-hit
ledger localizes the budget geometry: at $b \le 12$ every run of both
arms exhausts its cap, while at $b{=}24$ every eager run still does and
no \sys{} run ever does. Uncapped, the contrast is starker: eager revalidation
converts most steps into errands, while \sys{} settles at the same pace
it chooses under the loose cap. The wage, not the cap, is doing the
governing, which is the main text's claim read off a ledger.

\begin{table}[h]
\centering
\scriptsize
\caption{Frontier grid: both spending arms at every cap, with the frozen
arm as floor. Cap hit: runs that ever exhaust the trailing budget. Five
seeds at $b \in \{6, 12, 24\}$ and on the floor; three at the deep caps
and uncapped.}
\label{tab:afro}
\input{sections/appendix/tables/tab_e_frontier}
\end{table}

\subsection{The Inaction Ledger}
\label{app:res-nsr}

\textbf{Every unspent step names its clause, and the clauses separate
restraint from starvation.}
Table~\ref{tab:ansr} decomposes no-spend steps by logged reason. The cap
clause logs any step whose selected errand the trailing allowance cannot
fund at that moment; exhaustion, the event Table~\ref{tab:afro} counts,
is its limiting case. For
\sys{} at the loose cap, \ApNsrHatLooseActive\% of unspent steps are
active clauses, the index at zero or the gate below the wage: the
deadband holding as designed. For eager revalidation at the tight cap,
\ApNsrZeroTightCap\% of unspent steps are the cap itself: a policy that
wanted to spend and could not. The same table is the audit trail behind
the plateau reading of Table~\ref{tab:cpre}, and Case~3 of
Appendix~\ref{app:cases} replays one such window line by line.

\begin{table}[h]
\centering
\small
\caption{No-spend steps by clause, share of each run's no-spend total.}
\label{tab:ansr}
\input{sections/appendix/tables/tab_e_nsr}
\end{table}

\subsection{Baselines in Every Setting}
\label{app:res-baselines}

\textbf{No unpriced spender closes the gap anywhere, and spend does not
predict rank.}
Table~\ref{tab:abas} extends the main table's grid with each policy's
measured spend. The unpriced spenders pay \sys{}-comparable budgets at
the binding caps, eager revalidation alone spends the loose cap in full,
and none of it changes the order: what separates policies is not how much
they spend but what the spend buys. The two zero-spend anchors floor
every column, and the bigger store never buys back the gap, so scale
without maintenance is not a substitute for it.

\begin{table}[h]
\centering
\footnotesize
\caption{ITT and measured spend for all seven policies in five settings;
dashes, spend not instrumented in that wave.}
\label{tab:abas}
\input{sections/appendix/tables/tab_e_basegrid}
\end{table}

\subsection{The Coverage-Gap Ladder}
\label{app:res-covgap}

\textbf{The matched conditional gap is positive in every item class, and
the ordering premium doubles from the base to the high tier.}
Table~\ref{tab:acov} is the high-tier detail behind the kill-switch
estimand. The eleven-seed matched gap reads \ApCondMatched{}, the
eight-seed gate cohort \ApCondEight{}, same sign and overlapping, so the
effect is robust to the seed schedule. Broken out by item class, every
class is positive, from \ApClassTop{}\,pp on the fast on-path class down
to \ApClassFloor{}\,pp at the floor: the priced scheduler does not buy
its conditional gains in one corner of the roster. The ordering premium
itself moves with the dial, $\Delta_{\mathrm{ord}}$ rising from
\ApDordMid{}\,pp at the base tier to \ApDordHigh{}\,pp at the high tier,
a paired difference of \ApGapPriceDiff{} on the five shared seeds.

\begin{table}[h]
\centering
\small
\caption{High-tier arms and the matched pairs, by item class.}
\label{tab:acov}
\input{sections/appendix/tables/tab_e_covgap}
\end{table}

\subsection{Ablations, Per Arm}
\label{app:res-ablation}

\textbf{All seven removals cost success in the predicted direction, and
beliefs cost most.}
Table~\ref{tab:aabl} is the full component panel with paired intervals.
The deepest cut is not a scheduling mechanism: blinding the belief
(confirmations and refutations weighted alike) costs \ApAbDeepest{}\,pp,
more than twice any single index term, because a scheduler that orders
its doubt badly buys the wrong errands. Forcing $\mu_a{:=}0$, the
corpus's implicit no-return assumption, costs \ApRevive{}, the relearning
premium of Appendix~\ref{app:forms}; gating the comparator in abeyance,
the read-time--filter move, costs a significant share on its own, the
deployment twin of the numerical ladder in Table~\ref{tab:bnum}. Every
point estimate matches its preregistered sign; four of seven resolve at
$95\%$ under three seeds, and the direction column reports both facts
without pooling them.

\begin{table}[h]
\centering
\small
\caption{Single-component ablations at the base cap, paired with the full
arm.}
\label{tab:aabl}
\input{sections/appendix/tables/tab_e_ablation}
\end{table}

\subsection{The Allocator Ladder}
\label{app:res-allocator}

\textbf{The numeric index leads exactly where the budget is tight.}
Table~\ref{tab:aall} compares the deployed index against the closed-form
Whittle allocator and a myopic value-rate rule. At the tight cap the
ladder is strict: Whittle gives up \ApAlWhitTight{}\,pp and the myopic
rule \ApAlMyoTight{}\,pp against the numeric index, which prices the free
channel the closed form cannot see. At the loose cap the three run nearly
parallel (\ApAlWhitLoose{}\,pp), which is the zero-shadow-price scope
condition of Proposition~\ref{prop:shadow} showing up on the allocator
axis: with slack budget there is little left for ordering to decide.

\begin{table}[h]
\centering
\small
\caption{Allocator comparison at the tight and loose caps: three seeds,
allocators paired within one wave.}
\label{tab:aall}
\input{sections/appendix/tables/tab_e_allocator}
\end{table}

\subsection{The Coverage Grid and the Zero-Shadow-Price Layer}
\label{app:res-lifeboat}

\textbf{The ordering premium rises monotonically with the coverage gap,
and vanishes the moment the budget stops binding.}
Table~\ref{tab:alif} sweeps coverage $m$, cost tail, budget layer, and
comparator channel on the simulation grid, twenty seeds per cell;
Figure~\ref{fig:aprobe}a draws the four layers. On the
binding, open-channel layer the premium climbs from \ApLifeboatSat{}\,pp
at full coverage to \ApLifeboatMaxGap{} at zero coverage, Spearman
$\rho = \ApLifeboatRho$ across the $m$ ladder. On the non-binding layer
the premium pools to \ApLifeboatNB{}\,pp: nothing, as
Proposition~\ref{prop:shadow} says it must be. The gated-comparator
columns run systematically below their open twins, the grid-level echo of
the comparator ablation. Together the two layers draw the deployment
rule: paid ordering is the shadow price of the knowledge free evidence
cannot reach, and where receipts are plentiful or budget is slack, its
price is zero.

\begin{table}[h]
\centering
\footnotesize
\caption{Ordering premium $\Delta_{\mathrm{ord}}$ (pp) over coverage,
cost tail, budget layer, and channel.}
\label{tab:alif}
\input{sections/appendix/tables/tab_e_lifeboat}
\end{table}

\begin{figure}[t]
\centering
\begin{subfigure}{0.325\linewidth}
  \centering
  \includegraphics[width=\linewidth]{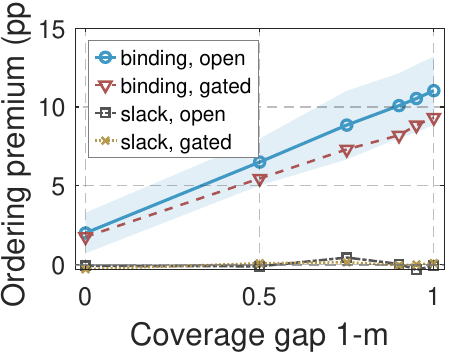}
  \caption{Zero shadow price}
\end{subfigure}\hfill
\begin{subfigure}{0.325\linewidth}
  \centering
  \includegraphics[width=\linewidth]{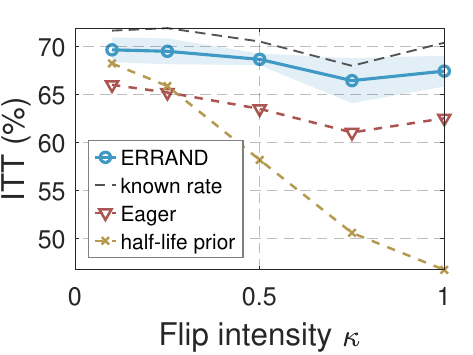}
  \caption{Drift envelope}
\end{subfigure}\hfill
\begin{subfigure}{0.325\linewidth}
  \centering
  \includegraphics[width=\linewidth]{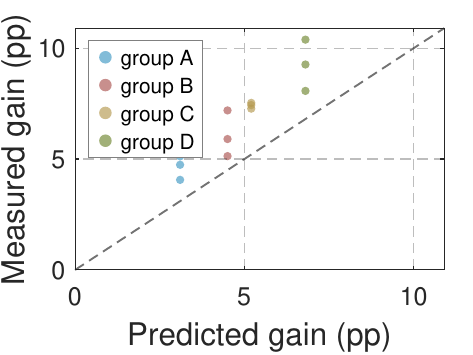}
  \caption{Correlated flips}
\end{subfigure}
\caption{Closing probes. (a)~Ordering premium over the coverage gap:
binding layers climb, slack layers pin to zero; band, twenty-seed s.d.
(b)~ITT across a tenfold drift range; the half-life prior collapses past
the crossover while \sys{} tracks the known-rate reference. (c)~Measured
against predicted coverage gain by flip group; every group sits above the
diagonal.}
\label{fig:aprobe}
\end{figure}

\subsection{The Drift Envelope}
\label{app:res-apiworld}

\textbf{Prices track the world; calendars track their priors.}
Table~\ref{tab:aapi} sweeps the flip intensity on the tool-API world
(Figure~\ref{fig:aprobe}b).
\sys{} holds between \ApAwHatMin{} and \ApAwHatMax\% across a tenfold
range, tracking the known-rate reference without being told $\kappa$. The
literature policy starts competitive at \ApAwLitPeak\% and collapses to
\ApAwLitFloor\% as real drift decouples from its half-life priors,
crossing below even eager revalidation at the fast end: past the
crossover, a wrong calendar is worse than no calendar.

\begin{table}[h]
\centering
\small
\caption{ITT across drift intensity on the tool-API world.}
\label{tab:aapi}
\input{sections/appendix/tables/tab_e_apiworld}
\end{table}

\subsection{Correlated Flips}
\label{app:res-corrflip}

\textbf{Under correlated drift the closed forms underpromise, so the
declared independence is a conservative bound, not a hidden subsidy.}
Table~\ref{tab:acor} runs the preregistered correlated-flip groups and
compares the coverage gain the closed forms predict with what the runs
deliver (Figure~\ref{fig:aprobe}c). Every group lands above its own forecast, pooling to an excess
of \ApCorrExcess{}\,pp: correlated flips make co-located doubt cheaper to
resolve than the independent model books, and en-route receipts repair
neighbors the forecast counts as unresolved.

\begin{table}[h]
\centering
\small
\caption{Predicted versus measured coverage gain by flip group.}
\label{tab:acor}
\input{sections/appendix/tables/tab_e_corrflip}
\end{table}

\subsection{The Regime Axes: Capability and Closure}
\label{app:res-regime}

\textbf{The backbone scales the gain; instrumental closure sets where any
gain can exist.}
Table~\ref{tab:abkb} repeats the headline comparison on the 4B floor: the
gap compresses from \ApFrontierGap{} at 8B to \ApFourBGap{} at 4B, same
sign, one fifth the size. Capability is a gain knob, not a reach knob:
the smaller model converts each maintained item into less recovered
success, but the ordering still pays. Reach has its own boundary.
Table~\ref{tab:aclo} runs the completion-verb layer as a single factor:
with fetch orders off, off-path fetch-verb atoms record \ApClosureOff{}
uses per order, reproducing the zero-use regime the calibration pilot
uncovered, and turning the orders on releases them to \ApClosureOn{}.
Outside the task graph's closure, usage is zero, the index multiplies by
zero, and no scheduler can matter; the regime map's third axis is this
boundary, measured.

\begin{table}[h]
\centering
\small
\caption{Capability floor: the headline arms at 4B beside their 8B
reference.}
\label{tab:abkb}
\input{sections/appendix/tables/tab_e_backbone}
\end{table}

\begin{table}[h]
\centering
\small
\caption{Instrumental closure: uses per order with fetch orders off and
on.}
\label{tab:aclo}
\input{sections/appendix/tables/tab_e_closure}
\end{table}

\subsection{Horizon}
\label{app:res-horizon}

\textbf{Half the horizon already pays, and the second half pays more.}
Table~\ref{tab:ahor} re-reads every base-cap run at a $500$-step prefix.
The gap is already \ApGapAtFive{}\,pp across the three prefix re-reads
(\ApHatFive{} against \ApZeroFive\%) and reads \ApFrontierGapM{}\,pp on
the five-seed full horizon, a ratio of \ApGapRatio{}: maintenance compounds, because
every item kept fresh keeps paying on every later use, while stale-use
share climbs on the arms that let the store age.

\begin{table}[h]
\centering
\small
\caption{ITT and stale-use share at the $500$-step prefix and the full
horizon.}
\label{tab:ahor}
\input{sections/appendix/tables/tab_e_horizon}
\end{table}

\subsection{Freshness Decouples from Performance}
\label{app:res-stale}

\textbf{The store's freshness and the agent's success are different
quantities, and optimizing the first loses the second.}
Table~\ref{tab:asta} is the stale-use panorama. The oracle filter drives
the share to \ApStaleOracleB\%, the cleanest store on the board, and
still trails every spending policy on ITT: masking staleness is not
repairing it. \sys{} wins ITT everywhere while holding the share at
\ApStaleHatB\%, well above the oracle and well below eager revalidation's
\ApStaleEagerB\%; the bigger unmaintained store ages worst at
\ApStaleBigmemB\%. Table~\ref{tab:arad} closes the loop on the teaser:
the nine radar axes of Figure~\ref{fig:teaser}, as numbers, with the
paired margin positive on every axis.

\begin{table}[h]
\centering
\small
\caption{Stale-use share across settings and the $500$-step prefix.}
\label{tab:asta}
\input{sections/appendix/tables/tab_e_stale}
\end{table}

\begin{table}[h]
\centering
\small
\caption{The nine deployment axes of Figure~\ref{fig:teaser}, as numbers;
three seeds per axis.}
\label{tab:arad}
\input{sections/appendix/tables/tab_e_radar}
\end{table}

\subsection{Overhead, Itemized}
\label{app:res-overhead}

\textbf{Scheduling is arithmetic, and targeted errands replace repeated
failure recovery.}
Table~\ref{tab:aovh} is the full per-step account behind the main text's
overhead table: prompt and generation tokens, model calls, scheduler
wall-clock, and per-errand tokens, per policy. The scheduler's own cost
is milliseconds of arithmetic with no model call at any step; the
spending baselines pay more calls per step because unrepaired staleness
converts into retries downstream.

\begin{table}[h]
\centering
\small
\caption{Per-step overhead, all columns, by policy.}
\label{tab:aovh}
\input{sections/appendix/tables/tab_e_overhead}
\end{table}

%% file: sections/appendix/tables/tab_e_frontier.tex
\setlength{\tabcolsep}{3.2pt}
\renewcommand{\arraystretch}{1.12}
\rowcolors{3}{white}{zebra}
\begin{tabular}{@{}llrrrrrrrrr@{}}
\toprule
\textbf{Arm} & \textbf{Cap} & \multicolumn{3}{c}{\cellcolor{grpteal}\textbf{Conditional (\%)}} & \cellcolor{grpblue}\textbf{ITT (\%)} & \multicolumn{2}{c}{\cellcolor{grporange}\textbf{Spend}} & \textbf{Errands} & \textbf{Judged} & \textbf{Cap hit} \\
\cmidrule(lr){3-5}\cmidrule(lr){6-6}\cmidrule(lr){7-8}
 & & judged & full & literal & all steps & \%\,steps & cost/errand & per run & per run & runs \\
\midrule
\textbf{\sys{} ($\hat{\nu}_0$)} & $b{=}2$ & 54.2\stdv{2.5} & 54.1\stdv{2.6} & 88.8\stdv{0.9} & 58.4\stdv{1.0} & 1.9\stdv{0.1} & 1.03 & 32 & -- & 3/3 \\
 & $b{=}3$ & 55.0\stdv{2.3} & 54.8\stdv{2.2} & 89.6\stdv{1.3} & 60.7\stdv{2.3} & 2.8\stdv{0.1} & 0.99 & 50 & -- & 3/3 \\
 & $b{=}6$ & 56.0\stdv{3.3} & 55.8\stdv{3.4} & 87.7\stdv{1.7} & 65.8\stdv{1.5} & 6.2\stdv{0.2} & 1.04 & 102 & 131 & 5/5 \\
 & $b{=}12$ & 57.8\stdv{4.6} & 57.7\stdv{4.6} & 89.6\stdv{1.8} & 71.9\stdv{2.3} & 10.9\stdv{0.2} & 1.09 & 170 & 132 & 5/5 \\
 & $b{=}24$ & 57.9\stdv{4.2} & 57.8\stdv{4.0} & 89.0\stdv{3.3} & 71.5\stdv{2.7} & 10.9\stdv{0.3} & 1.09 & 170 & 120 & 0/5 \\
 & no cap & 58.6\stdv{4.6} & 58.7\stdv{4.7} & 88.1\stdv{1.1} & 71.4\stdv{1.6} & 11.0\stdv{0.1} & 1.07 & 186 & -- & -- \\
\midrule
Eager revalidation ($\nu{\to}0$) & $b{=}2$ & 52.0\stdv{1.2} & 52.0\stdv{1.0} & 88.9\stdv{0.1} & 53.9\stdv{2.2} & 1.9\stdv{0.1} & 1.01 & 35 & -- & 3/3 \\
 & $b{=}3$ & 53.3\stdv{2.9} & 53.2\stdv{2.9} & 87.0\stdv{1.4} & 54.9\stdv{1.0} & 2.9\stdv{0.1} & 1.01 & 47 & -- & 3/3 \\
 & $b{=}6$ & 53.5\stdv{2.8} & 53.4\stdv{2.7} & 87.1\stdv{2.2} & 57.8\stdv{1.5} & 6.2\stdv{0.3} & 1.00 & 105 & 147 & 5/5 \\
 & $b{=}12$ & 56.0\stdv{4.5} & 55.9\stdv{4.4} & 89.3\stdv{1.1} & 61.9\stdv{2.0} & 12.1\stdv{0.1} & 1.01 & 204 & 159 & 5/5 \\
 & $b{=}24$ & 58.0\stdv{3.2} & 58.0\stdv{3.2} & 90.7\stdv{2.3} & 66.2\stdv{3.1} & 24.1\stdv{0.3} & 1.01 & 405 & 132 & 5/5 \\
 & no cap & 59.9\stdv{3.1} & 59.8\stdv{2.9} & 91.0\stdv{0.8} & 67.4\stdv{1.9} & 70.7\stdv{0.1} & 1.00 & 1201 & -- & -- \\
\midrule
No maintenance ($\nu{=}\infty$) & -- & 48.9\stdv{2.3} & 48.8\stdv{2.3} & 78.7\stdv{1.3} & 51.1\stdv{2.3} & 0.0\stdv{0.0} & 0.00 & 0 & 107 & -- \\
\bottomrule
\end{tabular}

%% file: sections/appendix/tables/tab_e_nsr.tex
\setlength{\tabcolsep}{4.5pt}
\renewcommand{\arraystretch}{1.12}
\rowcolors{3}{white}{zebra}
\begin{tabular}{@{}llrrrrr@{}}
\toprule
\textbf{Arm} & \textbf{Cap} & \multicolumn{5}{c}{\cellcolor{grppurple}\textbf{Share of no-spend steps (\%)}} \\
\cmidrule(lr){3-7}
 & & no candidate & index at zero & gate below wage & cap reached & in flight \\
\midrule
\textbf{\sys{} ($\hat{\nu}_0$)} & $b{=}2$ & 10.9 & 20.8 & 17.9 & 48.3 & 2.2 \\
 & $b{=}3$ & 12.1 & 21.6 & 18.2 & 45.0 & 3.1 \\
 & $b{=}6$ & 11.1 & 21.7 & 18.3 & 46.2 & 2.7 \\
 & $b{=}12$ & 12.2 & 39.0 & 33.5 & 12.3 & 2.9 \\
 & $b{=}24$ & 14.5 & 43.7 & 37.5 & 1.0 & 3.3 \\
 & no cap & 14.7 & 41.9 & 39.1 & 0.0 & 4.3 \\
\midrule
Eager revalidation ($\nu{\to}0$) & $b{=}2$ & 10.4 & 8.8 & 4.2 & 73.8 & 2.9 \\
 & $b{=}3$ & 11.7 & 10.3 & 3.1 & 71.6 & 3.3 \\
 & $b{=}6$ & 12.3 & 10.8 & 3.8 & 70.2 & 2.8 \\
 & $b{=}12$ & 16.3 & 14.8 & 5.9 & 59.0 & 3.9 \\
 & $b{=}24$ & 28.5 & 20.8 & 9.0 & 34.5 & 7.1 \\
 & no cap & 48.5 & 27.2 & 11.1 & 0.0 & 13.2 \\
\bottomrule
\end{tabular}

%% file: sections/appendix/tables/tab_e_basegrid.tex
\setlength{\tabcolsep}{2.4pt}
\renewcommand{\arraystretch}{1.12}
\begin{tabular}{@{}lrrrrrrrrrr@{}}
\toprule
\textbf{Method} & \multicolumn{2}{c}{\cellcolor{grpblue}\textbf{$b{=}6$}} & \multicolumn{2}{c}{\cellcolor{grpblue}\textbf{$b{=}12$}} & \multicolumn{2}{c}{\cellcolor{grpblue}\textbf{$b{=}24$}} & \multicolumn{2}{c}{\cellcolor{grpblue}\textbf{high gap}} & \multicolumn{2}{c}{\cellcolor{grpblue}\textbf{4B floor}} \\
\cmidrule(lr){2-3}\cmidrule(lr){4-5}\cmidrule(lr){6-7}\cmidrule(lr){8-9}\cmidrule(lr){10-11}
 & ITT & Spend & ITT & Spend & ITT & Spend & ITT & Spend & ITT & Spend \\
\midrule
No maintenance ($\nu{=}\infty$) & 51.1 & 0.0 & 51.1 & 0.0 & 51.1 & 0.0 & 45.4 & 0.0 & 43.8 & 0.0 \\
Bigger memory, no maint. & 51.2 & 0.0 & 51.2 & 0.0 & 51.2 & 0.0 & 46.5 & 0.0 & 43.6 & 0.0 \\
Fixed-cadence recheck & 58.1 & 5.9 & 61.4 & 10.8 & 64.9 & 11.3 & 56.5 & 10.9 & 46.7 & 10.6 \\
Equal-budget random & 55.1 & 5.8 & 60.8 & 10.6 & 66.2 & 10.8 & 54.4 & 10.5 & 46.4 & 10.5 \\
Eager revalidation ($\nu{\to}0$) & 57.8 & 6.2 & 61.9 & 12.1 & 66.2 & 24.1 & 57.2 & 12.2 & 48.5 & -- \\
\textit{Oracle detect-and-filter}$^{\dagger}$ & \textit{53.8} & 0.0 & \textit{53.8} & 0.0 & \textit{53.8} & 0.0 & \textit{47.3} & 0.0 & \textit{45.1} & 0.0 \\
\rowcolor{herobg} \textbf{\sys{} ($\hat{\nu}_0$)} & \textbf{65.8} & 6.2 & \textbf{71.9} & 10.9 & \textbf{71.5} & 10.9 & \textbf{67.1} & 10.7 & \textbf{50.6} & -- \\
\bottomrule
\end{tabular}

%% file: sections/appendix/tables/tab_e_covgap.tex
\setlength{\tabcolsep}{4.5pt}
\renewcommand{\arraystretch}{1.12}
\rowcolors{2}{white}{zebra}
\begin{tabular}{@{}lrrrrr@{}}
\toprule
\textbf{Arm (high tier)} & \textbf{Cond.\ judged} & \textbf{Cond.\ full} & \textbf{ITT} & \textbf{Spend} & $\Delta_{\mathrm{ord}}$ (pp) \\
\midrule
\rowcolor{herobg} \textbf{\sys{} ($\hat{\nu}_0$)} & 59.5\stdv{2.4} & 59.4\stdv{2.5} & 67.1\stdv{1.3} & 10.7\stdv{0.2} & 7.3\stdv{1.0} \\
Eager revalidation ($\nu{\to}0$) & 54.6\stdv{1.5} & 54.5\stdv{1.4} & 57.2\stdv{0.6} & 12.2\stdv{0.4} & 3.3\stdv{0.7} \\
\midrule
\multicolumn{6}{@{}l}{\emph{Matched conditional pairs by item class (1016 pairs, 11 seeds; seed-clustered):}} \\
\midrule
\quad A\_onpath & \multicolumn{2}{r}{261 pairs} & \multicolumn{3}{r}{$\Delta$cond $+8.4$\,pp [+6.1, +11.1]} \\
\quad B\_onpath & \multicolumn{2}{r}{248 pairs} & \multicolumn{3}{r}{$\Delta$cond $+4.6$\,pp [+1.9, +7.2]} \\
\quad C\_fetch & \multicolumn{2}{r}{256 pairs} & \multicolumn{3}{r}{$\Delta$cond $+3.3$\,pp [-1.2, +7.8]} \\
\quad D\_go & \multicolumn{2}{r}{251 pairs} & \multicolumn{3}{r}{$\Delta$cond $+3.2$\,pp [-0.7, +7.6]} \\
\bottomrule
\end{tabular}

%% file: sections/appendix/tables/tab_e_ablation.tex
\setlength{\tabcolsep}{4.0pt}
\renewcommand{\arraystretch}{1.12}
\rowcolors{2}{white}{zebra}
\begin{tabular}{@{}lrrrrl@{}}
\toprule
\textbf{Variant} & \textbf{ITT (\%)} & \textbf{Cond.\ (\%)} & \textbf{Spend (\%)} & $\Delta$\textbf{ITT [95\% CI]} & \textbf{Predicted} \\
\midrule
\rowcolor{herobg} \textbf{Full \sys{}} & 71.3\stdv{2.0} & 61.3\stdv{0.7} & 10.3\stdv{0.4} & -- & \\
\midrule
w/o deadband gate (pure ranking) & 67.8\stdv{0.7} & 58.3\stdv{1.7} & 10.8\stdv{0.3} & $-3.5$\,pp [-7.4, +0.4] & $-$\,\checkmark \\
coverage sum $\to$ max & 68.1\stdv{2.9} & 58.5\stdv{1.1} & 10.7\stdv{0.5} & $-3.2$\,pp [-6.7, +0.4] & $-$\,\checkmark \\
index: belief only & 66.0\stdv{2.7} & 57.1\stdv{2.0} & 11.0\stdv{0.2} & $-5.3$\,pp [-7.1, -3.5] & $-$\,\checkmark \\
index: age only & 64.7\stdv{4.3} & 57.0\stdv{1.9} & 11.0\stdv{0.2} & $-6.6$\,pp [-13.3, +0.1] & $-$\,\checkmark \\
$\mu_a{:=}0$ (no regime return) & 66.3\stdv{3.0} & 57.9\stdv{3.6} & 11.0\stdv{0.2} & $-5.0$\,pp [-8.6, -1.4] & $-$\,\checkmark \\
uncalibrated belief & 58.3\stdv{3.2} & 52.6\stdv{0.5} & 10.8\stdv{0.4} & $-13.0$\,pp [-17.5, -8.4] & $-$\,\checkmark \\
comparator gated in abeyance & 68.2\stdv{2.4} & 58.2\stdv{1.8} & 10.7\stdv{0.6} & $-3.1$\,pp [-5.2, -1.0] & $-$\,\checkmark \\
\bottomrule
\end{tabular}

%% file: sections/appendix/tables/tab_e_allocator.tex
\setlength{\tabcolsep}{5.5pt}
\renewcommand{\arraystretch}{1.12}
\rowcolors{2}{white}{zebra}
\begin{tabular}{@{}lrrrr@{}}
\toprule
\textbf{Allocator} & \multicolumn{2}{c}{\cellcolor{grpblue}$b{=}6$ (tight)} & \multicolumn{2}{c}{\cellcolor{grpteal}$b{=}24$ (loose)} \\
\cmidrule(lr){2-3}\cmidrule(lr){4-5}
 & ITT (\%) & $\Delta$ vs.\ numeric & ITT (\%) & $\Delta$ vs.\ numeric \\
\midrule
\rowcolor{herobg} \textbf{numeric index (receipts priced in)} & \textbf{67.1}\stdv{1.4} & -- & \textbf{72.5}\stdv{1.5} & -- \\
Whittle closed form & 65.0\stdv{0.8} & $-2.2$ & 70.6\stdv{1.0} & $-1.9$ \\
myopic $\mathrm{VOI}\hat{u}/c$ & 62.3\stdv{0.8} & $-4.8$ & 70.7\stdv{1.3} & $-1.8$ \\
\bottomrule
\end{tabular}

%% file: sections/appendix/tables/tab_e_lifeboat.tex
\setlength{\tabcolsep}{4.2pt}
\renewcommand{\arraystretch}{1.12}
\rowcolors{3}{white}{zebra}
\begin{tabular}{@{}lrrrrrrrr@{}}
\toprule
\textbf{Coverage $m$} & \multicolumn{4}{c}{\cellcolor{grpblue}\textbf{Binding budget}} & \multicolumn{4}{c}{\cellcolor{grporange}\textbf{Non-binding budget}} \\
\cmidrule(lr){2-5}\cmidrule(lr){6-9}
 & \multicolumn{2}{c}{open channel} & \multicolumn{2}{c}{gated comparator} & \multicolumn{2}{c}{open channel} & \multicolumn{2}{c}{gated comparator} \\
 & light $c$ & heavy $c$ & light $c$ & heavy $c$ & light $c$ & heavy $c$ & light $c$ & heavy $c$ \\
\midrule
$1$ & 1.7\stdv{1.2} & 2.3\stdv{1.3} & 1.6\stdv{1.0} & 1.9\stdv{1.1} & -0.1\stdv{0.9} & -0.0\stdv{0.8} & -0.4\stdv{0.9} & -0.2\stdv{0.9} \\
$0.5$ & 5.6\stdv{1.0} & 7.4\stdv{1.3} & 4.8\stdv{0.8} & 6.1\stdv{1.1} & -0.3\stdv{0.9} & 0.1\stdv{0.9} & 0.1\stdv{0.8} & 0.1\stdv{1.0} \\
$0.25$ & 7.2\stdv{1.4} & 10.5\stdv{1.4} & 6.0\stdv{1.0} & 8.6\stdv{0.9} & 0.5\stdv{0.7} & 0.4\stdv{1.1} & 0.1\stdv{0.7} & 0.2\stdv{0.7} \\
$0.1$ & 8.4\stdv{1.2} & 11.8\stdv{1.1} & 6.6\stdv{1.1} & 9.8\stdv{1.0} & -0.1\stdv{1.1} & 0.2\stdv{0.8} & 0.0\stdv{1.0} & -0.1\stdv{0.7} \\
$0.05$ & 8.8\stdv{1.3} & 12.3\stdv{1.2} & 7.0\stdv{1.0} & 10.7\stdv{1.2} & -0.1\stdv{1.1} & -0.5\stdv{1.2} & 0.2\stdv{1.0} & -0.2\stdv{1.0} \\
$0$ & 9.1\stdv{0.9} & 12.9\stdv{1.1} & 7.7\stdv{1.0} & 10.9\stdv{1.0} & 0.1\stdv{0.7} & -0.2\stdv{0.8} & 0.2\stdv{0.7} & -0.0\stdv{0.8} \\
\bottomrule
\end{tabular}

%% file: sections/appendix/tables/tab_e_apiworld.tex
\setlength{\tabcolsep}{5.5pt}
\renewcommand{\arraystretch}{1.12}
\rowcolors{2}{white}{zebra}
\begin{tabular}{@{}lrrrrr@{}}
\toprule
\textbf{Policy} & $\kappa{=}0.1$ & $\kappa{=}0.25$ & $\kappa{=}0.5$ & $\kappa{=}0.75$ & $\kappa{=}1$ \\
\midrule
\rowcolor{herobg} \textbf{\sys{} ($\hat{\nu}_0$)} & 69.7\stdv{1.3} & 69.5\stdv{1.3} & 68.7\stdv{0.6} & 66.5\stdv{2.4} & 67.5\stdv{1.6} \\
\textit{Known-rate reference} & 71.7\stdv{1.6} & 72.0\stdv{0.9} & 70.6\stdv{0.2} & 68.0\stdv{1.3} & 70.4\stdv{2.5} \\
Eager ($\nu{\to}0$) & 66.0\stdv{0.5} & 65.2\stdv{0.2} & 63.5\stdv{0.3} & 61.1\stdv{1.0} & 62.5\stdv{2.6} \\
Literature half-life policy & 68.3\stdv{0.8} & 65.9\stdv{1.4} & 58.2\stdv{0.3} & 50.6\stdv{1.2} & 46.7\stdv{2.5} \\
\bottomrule
\end{tabular}

%% file: sections/appendix/tables/tab_e_corrflip.tex
\setlength{\tabcolsep}{5.5pt}
\renewcommand{\arraystretch}{1.12}
\rowcolors{2}{white}{zebra}
\begin{tabular}{@{}lrrr@{}}
\toprule
\textbf{Flip group} & \textbf{Predicted gain (pp)} & \textbf{Measured (pp)} & \textbf{Excess (pp)} \\
\midrule
flip A & 3.10\stdv{0.00} & 4.64\stdv{0.54} & 1.54\stdv{0.54} \\
flip B & 4.50\stdv{0.00} & 6.07\stdv{1.04} & 1.57\stdv{1.04} \\
flip C & 5.20\stdv{0.00} & 7.41\stdv{0.13} & 2.21\stdv{0.13} \\
flip D & 6.80\stdv{0.00} & 9.24\stdv{1.16} & 2.44\stdv{1.16} \\
\midrule
pooled & 4.90\stdv{1.39} & 6.84\stdv{1.91} & \textbf{1.94\stdv{0.81}} \\
\bottomrule
\end{tabular}

%% file: sections/appendix/tables/tab_e_backbone.tex
\setlength{\tabcolsep}{5.0pt}
\renewcommand{\arraystretch}{1.12}
\rowcolors{2}{white}{zebra}
\begin{tabular}{@{}lrrrr@{}}
\toprule
\textbf{Arm} & \multicolumn{2}{c}{\cellcolor{grppurple}\textbf{4B backbone}} & \multicolumn{2}{c}{\cellcolor{grpblue}\textbf{8B reference}} \\
\cmidrule(lr){2-3}\cmidrule(lr){4-5}
 & ITT (\%) & Cond.\ (\%) & ITT (\%) & Cond.\ (\%) \\
\midrule
No maintenance ($\nu{=}\infty$) & 43.8\stdv{2.2} & 43.6\stdv{1.3} & 51.1\stdv{2.3} & 48.9\stdv{2.3} \\
Eager revalidation ($\nu{\to}0$) & 48.5\stdv{1.5} & 46.0\stdv{1.3} & 61.9\stdv{2.0} & 56.0\stdv{4.5} \\
\rowcolor{herobg} \textbf{\sys{} ($\hat{\nu}_0$)} & 50.6\stdv{1.3} & 47.1\stdv{1.3} & 71.9\stdv{2.3} & 57.8\stdv{4.6} \\
\bottomrule
\end{tabular}

%% file: sections/appendix/tables/tab_e_closure.tex
\setlength{\tabcolsep}{5.5pt}
\renewcommand{\arraystretch}{1.12}
\rowcolors{2}{white}{zebra}
\begin{tabular}{@{}llrr@{}}
\toprule
\textbf{Atom group} & \textbf{Verb class} & \textbf{Fetch orders off} & \textbf{Fetch orders on} \\
\midrule
C\_obj\_value & fetch & 0.00\stdv{0.00} & 4.26\stdv{0.51} \\
D\_path & go & 1.14\stdv{0.04} & 5.62\stdv{0.35} \\
F\_obj\_value & fetch & 0.00\stdv{0.00} & 0.17\stdv{0.00} \\
\bottomrule
\end{tabular}

%% file: sections/appendix/tables/tab_e_horizon.tex
\setlength{\tabcolsep}{5.5pt}
\renewcommand{\arraystretch}{1.12}
\rowcolors{2}{white}{zebra}
\begin{tabular}{@{}lrrrr@{}}
\toprule
\textbf{Method} & \textbf{ITT@500} & \textbf{ITT@2000} & \textbf{Stale@500} & \textbf{Stale@2000} \\
\midrule
No maintenance ($\nu{=}\infty$) & 60.1\stdv{0.9} & 51.1\stdv{2.3} & 28.7\stdv{1.2} & 43.2\stdv{1.2} \\
Bigger memory, no maint. & 60.6\stdv{1.1} & 51.2\stdv{1.3} & 30.6\stdv{0.6} & 53.1\stdv{1.8} \\
Fixed-cadence recheck & 66.5\stdv{0.6} & 61.4\stdv{0.1} & 10.5\stdv{0.8} & 17.1\stdv{1.9} \\
Equal-budget random & 65.4\stdv{1.5} & 60.8\stdv{1.8} & 10.8\stdv{1.7} & 18.6\stdv{0.8} \\
Eager revalidation ($\nu{\to}0$) & 67.2\stdv{0.5} & 61.9\stdv{2.0} & 13.2\stdv{1.4} & 21.9\stdv{1.0} \\
\rowcolor{herobg} \textbf{\sys{} ($\hat{\nu}_0$)} & 73.4\stdv{1.3} & 71.9\stdv{2.3} & 7.2\stdv{0.3} & 12.2\stdv{1.0} \\
\bottomrule
\end{tabular}

%% file: sections/appendix/tables/tab_e_stale.tex
\setlength{\tabcolsep}{5.5pt}
\renewcommand{\arraystretch}{1.12}
\rowcolors{2}{white}{zebra}
\begin{tabular}{@{}lrrrrrr@{}}
\toprule
\textbf{Method} & $b{=}6$ & $b{=}12$ & $b{=}24$ & high & 4B & $T{=}500$ \\
\midrule
No maintenance ($\nu{=}\infty$) & 43.2\stdv{1.2} & 43.2\stdv{1.2} & 43.2\stdv{1.2} & 54.8\stdv{0.3} & 46.5\stdv{0.4} & 28.7\stdv{1.2} \\
Bigger memory, no maint. & 53.1\stdv{1.8} & 53.1\stdv{1.8} & 53.1\stdv{1.8} & 55.6\stdv{0.4} & 51.4\stdv{0.5} & 30.6\stdv{0.6} \\
Fixed-cadence recheck & 26.5\stdv{0.2} & 17.1\stdv{1.9} & 14.8\stdv{0.8} & 21.5\stdv{1.7} & 18.8\stdv{1.1} & 10.5\stdv{0.8} \\
Equal-budget random & 28.3\stdv{0.8} & 18.6\stdv{0.8} & 15.0\stdv{1.4} & 23.3\stdv{0.9} & 20.0\stdv{1.5} & 10.8\stdv{1.7} \\
Eager revalidation ($\nu{\to}0$) & 31.6\stdv{0.5} & 21.9\stdv{1.0} & 13.9\stdv{0.4} & 26.1\stdv{0.6} & 24.9\stdv{1.7} & 13.2\stdv{1.4} \\
\textit{Oracle detect-and-filter}$^{\dagger}$ & 3.4\stdv{1.3} & 3.4\stdv{1.3} & 3.4\stdv{1.3} & 2.6\stdv{1.3} & 4.3\stdv{0.7} & -- \\
\rowcolor{herobg} \textbf{\sys{} ($\hat{\nu}_0$)} & 20.1\stdv{2.0} & 12.2\stdv{1.0} & 12.9\stdv{1.5} & 16.2\stdv{0.5} & 12.9\stdv{1.7} & 7.2\stdv{0.3} \\
\bottomrule
\end{tabular}

%% file: sections/appendix/tables/tab_e_radar.tex
\setlength{\tabcolsep}{5.5pt}
\renewcommand{\arraystretch}{1.12}
\rowcolors{2}{white}{zebra}
\begin{tabular}{@{}llrrr@{}}
\toprule
\textbf{Axis} & \textbf{Family} & \sys{} \textbf{ITT} & \textbf{Eager ITT} & $\Delta$ (pp) \\
\midrule
Budget $b{=}6$, base gap & Budget & \textbf{66.1\stdv{1.4}} & 57.7\stdv{1.7} & $+8.3$ \\
Budget $b{=}12$, base gap & Budget & \textbf{71.0\stdv{1.4}} & 61.7\stdv{0.6} & $+9.4$ \\
Budget $b{=}24$, base gap & Budget & \textbf{72.4\stdv{2.6}} & 66.5\stdv{1.8} & $+5.9$ \\
Budget $b{=}6$, high gap & Coverage & \textbf{60.0\stdv{0.9}} & 53.5\stdv{1.0} & $+6.5$ \\
Budget $b{=}12$, high gap & Coverage & \textbf{66.8\stdv{1.4}} & 57.5\stdv{0.6} & $+9.3$ \\
Budget $b{=}24$, high gap & Coverage & \textbf{66.7\stdv{1.4}} & 61.9\stdv{1.7} & $+4.8$ \\
Capability floor (4B) & Model & \textbf{50.6\stdv{1.3}} & 48.5\stdv{1.5} & $+2.1$ \\
Short horizon ($T{=}500$) & World & \textbf{73.4\stdv{1.3}} & 67.2\stdv{0.5} & $+6.2$ \\
Tool-API world (pooled $\kappa$) & World & \textbf{68.4\stdv{0.7}} & 63.7\stdv{0.7} & $+4.7$ \\
\bottomrule
\end{tabular}

%% file: sections/appendix/tables/tab_e_overhead.tex
\setlength{\tabcolsep}{3.2pt}
\renewcommand{\arraystretch}{1.12}
\rowcolors{2}{white}{zebra}
\begin{tabular}{@{}lrrrrr@{}}
\toprule
\textbf{Policy} & \textbf{Prompt tok/step} & \textbf{Gen.\ tok/step} & \textbf{LLM calls/step} & \textbf{Sched.\ ms/step} & \textbf{Tok/errand} \\
\midrule
\rowcolor{herobg} \textbf{\sys{} ($\hat{\nu}_0$)} & 431\stdv{12} & 64\stdv{2} & 1.10\stdv{0.04} & 39.8\stdv{1.1} & 361\stdv{4} \\
Eager revalidation ($\nu{\to}0$) & 455\stdv{3} & 72\stdv{4} & 1.40\stdv{0.11} & 32.7\stdv{0.8} & 334\stdv{16} \\
Fixed-cadence recheck & 417\stdv{9} & 60\stdv{2} & 1.53\stdv{0.08} & 30.4\stdv{0.4} & 355\stdv{13} \\
Equal-budget random & 426\stdv{10} & 62\stdv{2} & 1.44\stdv{0.06} & 28.8\stdv{2.1} & 361\stdv{11} \\
\bottomrule
\end{tabular}

%% file: sections/appendix/app_f_instruments.tex
\section{Instrument Specifications}
\label{app:instruments}

This appendix reproduces the instruments the study runs on: the briefing
item schema, the work-order contract, the spend ledger, the stratified
accounting audit, and the operational form of every baseline. All
specifications are the pinned versions registered in the instrument
manifest before any full-wave GPU hour, and ship with the release package
of Appendix~\ref{app:repro}.

\subsection{The Briefing Item}
\label{app:item-schema}

Every consolidated item grounds in one checkable atom; instrumentation
keeps items single-atom by design (Section~\ref{app:drift}). The store
carries, per item, exactly the fields below; the worked example is the
item that Figure~\ref{fig:case} and Appendix~\ref{app:cases} follow.

\begin{custombox}{Briefing item schema, with the case-study item}
\setlength{\parindent}{0pt}
\begin{simplecode}
item \{\\
\hspace*{1.5em}atom\_id:\ \ bfB\_val\_oil\_bomb\\
\hspace*{1.5em}group:\ \ \ \ B \ (on-path, slow schedule)\\
\hspace*{1.5em}verb:\ \ \ \ \ buy\\
\hspace*{1.5em}site:\ \ \ \ \ area\_bastion\_watchtower \ (npc\_kael\_ashworth)\\
\hspace*{1.5em}cost:\ \ \ \ \ c = 3 \ (= distance 2 + 1, audited)\\
\hspace*{1.5em}recorded:\ required\_fuse = short\\
\hspace*{1.5em}state:\ \ \ \ in-service | abeyant | superseded(version)\\
\hspace*{1.5em}belief:\ \ \ q, \ receipts N, \ resolve-by, \ last receipt step\\
\}
\end{simplecode}
\textbf{Note:} the recorded value is the handed-over belief, not the live
truth; the world flips \texttt{required\_fuse} on its own schedule, and
nothing in the stream announces it.
\end{custombox}

\subsection{The Work-Order Contract}
\label{app:order-contract}

Orders are the dispatch layer of the exogeneity design
(Section~\ref{app:exo}); their text is whitelisted and identical across
arms.

\begin{custombox}{Work-order fields and the truth-leak whitelist}
\setlength{\parindent}{0pt}
\textbf{Fields:} an order names its verb (\texttt{go} or \texttt{fetch}),
its target site or object, and, for fetch orders, the drop point or
character taking delivery. Nothing else.\\[3pt]
\textbf{Banned by whitelist:} any truth value, any freshness or flip
hint, any belief or schedule field, any mention of maintenance. The
briefing is the only knowledge channel.\\[3pt]
\textbf{Enforcement:} \texttt{assert\_no\_truth\_leak} runs at the first
step of every run and halts on any text outside the pinned whitelist;
the order tables of both dispatch tiers are preregistered and
fingerprinted (Appendix~\ref{app:repro}).
\end{custombox}

\subsection{The Spend Ledger}
\label{app:ledger-spec}

Every run emits a per-step ledger; the no-spend clause column is the
instrument behind Table~\ref{tab:ansr} and behind the deadband's
auditability claim. The excerpt is verbatim from a released base-cap
\sys{} run.

\begin{custombox}{Per-step ledger schema, with a released excerpt}
\setlength{\parindent}{0pt}
\textbf{Columns:} step; errand flag; cost this step; trailing spend per
hundred; wage $\wage$; cumulative free receipts; store size; and, on
every no-spend step, one clause from
\{\texttt{no\_candidate}, \texttt{idx\_le\_0}, \texttt{gate\_below\_nu},
\texttt{budget\_cap}, \texttt{inflight}\}.\\[4pt]
\begin{simplecode}
step  errand cost trail/100 wage\ \ \ reason\\
600 \ \ 1\ \ \ \ 1\ \ \ \ 11.0\ \ \ \ \ 0.0260\ --\\
610 \ \ 0\ \ \ \ 0\ \ \ \ 12.0\ \ \ \ \ 0.0275\ idx\_le\_0\\
618 \ \ 1\ \ \ \ 1\ \ \ \ 12.0\ \ \ \ \ 0.0275\ --\\
620 \ \ 0\ \ \ \ 0\ \ \ \ 13.0\ \ \ \ \ 0.0275\ idx\_le\_0\\
631 \ \ 1\ \ \ \ 2\ \ \ \ 10.0\ \ \ \ \ 0.0264\ --\\
680 \ \ 0\ \ \ \ 0\ \ \ \ \ 8.0\ \ \ \ \ 0.0260\ gate\_below\_nu\\
682 \ \ 1\ \ \ \ 1\ \ \ \ \ 8.0\ \ \ \ \ 0.0261\ --\\
692 \ \ 1\ \ \ \ 2\ \ \ \ \ 8.0\ \ \ \ \ 0.0273\ --
\end{simplecode}
\textbf{Reading:} \texttt{idx\_le\_0} and \texttt{gate\_below\_nu} are
chosen restraint; \texttt{budget\_cap} is inability. The two are never
pooled, which is what makes the plateau claim of Table~\ref{tab:cpre}
attributable.
\end{custombox}

\subsection{The Stratified Accounting Audit}
\label{app:audit-spec}

\begin{custombox}{Accounting audit, run before every wave}
\setlength{\parindent}{0pt}
\textbf{Why stratified:} accounting defects concentrate on the treated,
off-path stratum, where the mechanism's value lives, and a global smoke
test is dominated by on-path traffic that cannot see them
(pitfall~5 of Section~\ref{app:pitfalls}).\\[3pt]
\textbf{Three defect families checked, per stratum:} double-billing (one
errand charged at two sites), locked-edge-passable (a closed path
scored as traversable), and encounter-criterion drift (a success counted
into the wrong denominator).\\[3pt]
\textbf{Companion ledgers:} the dual-domain emission of every conditional
estimate (judged and full, Section~\ref{app:estimands}) and the
per-edge flip-interception ledger behind the effective-rate accounting of
Section~\ref{app:drift}.
\end{custombox}

\subsection{Baseline Instruments, Operationally}
\label{app:baseline-specs}

The oracle detect-and-filter reads the harness's flip event log at
retrieval time and masks any currently stale item, at zero action cost;
it is the read-time--filter ceiling and touches nothing else. The
known-rate reference receives the true $\kappa$ of the drift sweep and
otherwise runs the full scheduler. The fixed-cadence recheck scans two
items per period on a clock, blind to belief. Equal-budget random keeps
\sys{}'s realized spend trigger and draws its target uniformly from the
retrieved candidates. The bigger-store anchor adds twelve legacy items;
paired judging stays on the shared twenty-six. The literature policy maps
published half-life priors to refresh cadences per atom class. On the
mechanism axis, the blind-belief arm re-anchors confirmations and
refutations identically; the gated-comparator arm lets the comparator
read only in-service items; the $\mu_a{:=}0$ arm freezes flip-back in the
world model while the world keeps flipping; and the allocator arms swap
the index for the closed-form Whittle rule \citep{NinoMora2026ARXIV} or
the myopic value-rate ratio, leaving everything else fixed.

\subsection{Environment Card}
\label{app:env-card}

\begin{envcardbox}{The errand world}
\textbf{Harness:} AgentOdyssey v0.1.0 (MIT), driven off-ranking with
custom oscillation step rules; rules register by class discovery, so
drift needs no harness modification.\quad
\textbf{Truth channel:} step rules emit every atom flip as an event;
truth at any step is a log lookup, and no model judges outcomes.\quad
\textbf{Stream:} $2{,}000$ steps, $300$ warm-up, dispatch tiers of
Section~\ref{app:world}.\quad
\textbf{Serving:} two identical vLLM servers, frozen Qwen3-8B, paired
same-machine per seed and cap.
\end{envcardbox}

\begin{limitationbox}
Without fetch-completion orders, off-path fetch-verb atoms are outside
the agent's task graph and record zero uses per order
(Table~\ref{tab:aclo}); conclusions about them require the closure layer,
and the roster excludes one atom behind a connectivity guard for exactly
this reason. Path-locking intercepts a share of nominal flips, so all
drift accounting runs at the effective rate with the interception ledger
attached.
\end{limitationbox}

\begin{bestpracticebox}
Runs are comparable only under equal instrument fingerprints: instrument
tree, dispatch table, drift schedule, and seed, written at start-up by
the same script that launches the run. Per-arm smoke covers every
decision path before any wave, and the stratified audit of
Section~\ref{app:audit-spec} runs before, not after, the GPU hours.
\end{bestpracticebox}

%% file: sections/appendix/app_g_cases.tex
\section{Case Studies}
\label{app:cases}

This appendix replays the recorded evidence behind Figure~\ref{fig:case}
at full resolution, one item and one ledger at a time. The paired case
was selected by a preregistered predicate written before extraction: a
same-seed, same-machine pair of runs in which one arm's stale use
directly fails an order while the other holds the same drift in its
deadband and succeeds on the adjacent step. Every timeline below is a
structured excerpt of released run records, keyed by atom, step, and run;
none is reconstructed after the fact. The item under the microscope is
\texttt{bfB\_val\_oil\_bomb}, whose oscillating attribute
\texttt{required\_fuse} flips from \texttt{short} to \texttt{long} at
step $641$; both arms run the base cap on the same briefing and dispatch
table. Figure~\ref{fig:acase} draws both records.

\begin{figure}[t]
\centering
\begin{subfigure}{0.48\linewidth}
  \centering
  \includegraphics[width=\linewidth]{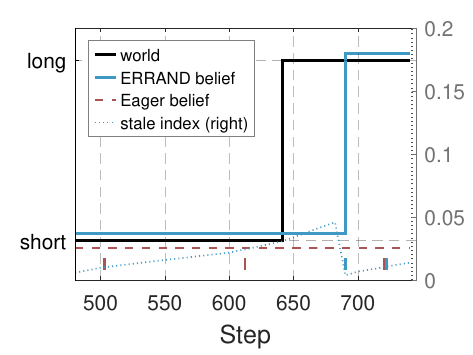}
  \caption{The item, both arms}
\end{subfigure}\hfill
\begin{subfigure}{0.48\linewidth}
  \centering
  \includegraphics[width=\linewidth]{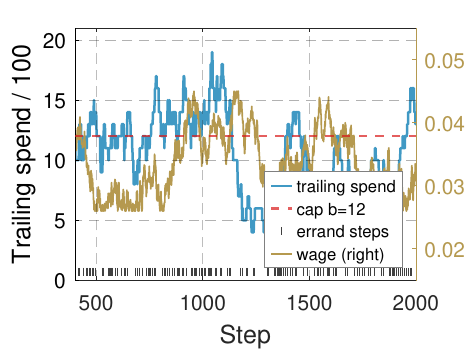}
  \caption{The spend ledger}
\end{subfigure}
\caption{The cases, as recorded. (a)~The item of Figure~\ref{fig:case}:
the world flips at step $641$; \sys{}'s belief follows at $690$ off a
free receipt while eager's never does; dotted vertical, the resolve-by
date; bottom ticks, each arm's recorded events. (b)~A full base-cap
ledger: the trailing account rides the cap early, overrunning only while
an errand is in flight, then the wage, not the cap, sets the pace;
bottom ticks, errand steps.}
\label{fig:acase}
\end{figure}

\subsection{Case 1: The Hold That Paid}
\label{app:case-hold}

\sys{} spends nothing on this item and wins anyway: the deadband holds
the doubt, the resolve-by date bounds the risk, and a co-located errand
returns the new value for free before the task arrives.

\begin{custombox}{Setting: \sys{} arm, base cap, seed 43}
\textbf{Mechanism under test:} the deadband gate with a resolve-by
deadline, and the never-gated comparator that turns a neighboring errand
into a free receipt.
\end{custombox}

\begin{casebox}{successbg}{successframe}{\sys{}: hold, free refresh, success}
\begin{itemize}[nosep, leftmargin=*]
  \item \textbf{Step 600, hold.} The stale index reaches $0.022$, under
  the calibrated gate; the item enters \texttt{hold} with resolve-by
  step $742$. Doubt is now a priced, deadlined state, not a silence.
  \item \textbf{Step 641, the world moves.} \texttt{required\_fuse}
  flips to \texttt{long}. The belief still reads \texttt{short}; the
  index keeps climbing ($0.031 \to 0.046$ by step $682$) but never
  clears the wage: no errand is worth buying yet.
  \item \textbf{Step 690, the free receipt.} A funded errand to the same
  watchtower, whose target is the neighboring item
  \texttt{bfB\_val\_pen}, passes the site; the comparator reads the
  current spec en route and returns \texttt{oil\_bomb}'s new value at
  zero marginal cost. Belief flips to \texttt{long}, state
  \texttt{refresh}, index resets to $0.004$.
  \item \textbf{Step 722, the use.} The order arrives; the served value
  matches the world; the delivery succeeds, budget never spent on this
  item.
\end{itemize}
\vspace{2pt}
\textbf{Verdict:} \textcolor{successframe}{refreshed for free at step
$690$, fifty-two steps after the flip and thirty-two before the use.}\\
\textbf{Downstream:} \textcolor{successframe}{restraint was the winning
allocation; the resolve-by date at $742$ bounded the hold the whole way.}
\end{casebox}

\subsection{Case 2: The Refresh That Didn't}
\label{app:case-eager}

Eager revalidation touches the same item earlier and still fails: an
unpriced budget spreads across the store, and the one refresh that lands
predates the flip it needed to catch.

\begin{custombox}{Setting: eager arm ($\nu{\to}0$), base cap, seed 43}
\textbf{Mechanism under test:} see-doubt-check-doubt revalidation with
no prices, on the identical briefing, world, and machine.
\end{custombox}

\begin{casebox}{failbg}{failframe}{Eager: early refresh, spread budget, stale use}
\begin{itemize}[nosep, leftmargin=*]
  \item \textbf{Step 503, the refresh.} A scheduled errand rechecks
  \texttt{oil\_bomb} and confirms \texttt{short}, correctly: the world
  has not moved yet. The budget bought an answer that was not in doubt.
  \item \textbf{Step 612, the spread.} The window's remaining budget
  goes to the neighboring low-value atom
  \texttt{bfB\_val\_scrap\_iron}; \texttt{oil\_bomb} is not touched
  again.
  \item \textbf{Step 641, the world moves.} The flip lands; the eager
  arm's belief stays \texttt{short}, and unpriced rotation gives this
  item no priority over any other.
  \item \textbf{Step 720, the use.} The paired order arrives two steps
  before Case~1's; the served value is stale, the fuse mismatches, the
  delivery is rejected.
\end{itemize}
\vspace{2pt}
\textbf{Verdict:} \textcolor{failframe}{refreshed at step $503$, $138$
steps too early; never again before the use.}\\
\textbf{Downstream:} \textcolor{failframe}{the run-level ITT gap of
Table~\ref{tab:afro} is this trajectory, repeated: spending more bought
the store less.}
\end{casebox}

\subsection{Case 3: The Inaction Ledger, Read Raw}
\label{app:case-ledger}

The deadband of Case~1 is not a narrative convenience; it is rows in a
ledger. This case reads a released hundred-step window of a base-cap
\sys{} run (the excerpt of Section~\ref{app:ledger-spec} continues here,
and Figure~\ref{fig:acase}b draws the run entire) and attributes every
unspent step.

\begin{custombox}{Setting: \sys{} arm, base cap, seed 42; steps 600--700}
\textbf{Window facts, verbatim from the ledger:} seven errand steps;
every no-spend step carries a clause; trailing spend stays between
$8$ and $14$ per hundred against the cap of $12$.
\end{custombox}

\begin{casebox}{successbg}{successframe}{One window, every silence attributed}
\begin{itemize}[nosep, leftmargin=*]
  \item \textbf{The spends.} Seven errands in the window, costs one to
  two actions each, each preceded by a candidate clearing the wage.
  \item \textbf{The chosen silences.} Seventeen steps log
  \texttt{idx\_le\_0}, certainty at one of the two free ends of the
  index, and thirteen log \texttt{gate\_below\_nu}, live doubt priced
  under the wage: thirty steps of restraint, each with a stated reason.
  \item \textbf{The forced silences.} Six steps log
  \texttt{budget\_cap}, the trailing window momentarily exhausted; one
  logs \texttt{no\_candidate} and one \texttt{inflight}. Restraint and
  starvation stay separable line by line.
  \item \textbf{The run in one row.} Over the full horizon this run
  spends on $9.5\%$ of steps against the $12\%$ allowance, collects
  fourteen free receipts, and holds the wage between $0.026$ and
  $0.047$: the meter, not the cap, sets the pace.
\end{itemize}
\vspace{2pt}
\textbf{Verdict:} \textcolor{successframe}{every unspent step in the
window names its clause; the deadband is an auditable object.}\\
\textbf{Downstream:} \textcolor{successframe}{aggregated over all runs
and caps, these clauses are Table~\ref{tab:ansr}; the plateau reading of
Table~\ref{tab:cpre} rests on exactly this separation.}
\end{casebox}

%% file: sections/appendix/app_h_repro.tex
\section{Reproducibility}
\label{app:repro}

\subsection{Release Package}
\label{app:release}

The release package is the anonymized code supplement accompanying
this submission: the scheduler library and its step rules, the
instrument manifest with both dispatch tables and the drift schedules,
and the briefing roster. No table cell or prose statistic in the main
text or this appendix is hand-entered: each is produced by a generator
script that reads the per-run atoms (run-level results, per-atom
rollups, spend ledgers, flip event logs, and start-up fingerprints)
and writes both the tables and the named macros the prose cites.

\subsection{Compute}
\label{app:compute}

The study ran inside a preregistered envelope of roughly $900$ GPU-hours
(about $38$ H100-days) across four waves; the delivered atoms cover more
than $260$ instrumented GPU deployments of $2{,}000$ steps each
($1{,}700$ scored), plus the zero-GPU re-reads of the $500$-step prefix
and the stale-share rollups. The coverage grid of
Table~\ref{tab:alif} ($960$ cells' worth of simulated deployments), the
two-state value iteration of Table~\ref{tab:bnum}, and the power
forecast of Table~\ref{tab:cpow} are CPU-only and consumed no GPU time.
Serving uses two identical vLLM hosts with the frozen 8B backbone and a
4B sibling for the capability floor.

\subsection{Randomness and Seeds}
\label{app:seeds}

Seeds follow the preregistered ladder: five on the frontier arms' main
surfaces, eight
extending to eleven on the high-gap pair under a preauthorized precision
upgrade, three on baseline arms and breadth surfaces, and twenty per
cell on the simulation grid. The harness's generator is seeded per run, the two arms
of every comparison run on the same machine per seed and cap, and
servers rotate across seeds. Tables report mean and one standard
deviation across runs; paired comparisons inherit the same-machine
design and use seed-level $t$ intervals, with seed-clustered bootstrap
cross-checks on the matched pairs. Every run writes a start-up
fingerprint of its instrument tree, dispatch table, drift schedule, and
seed, and runs enter a comparison only under equal fingerprints. A
fingerprint pins the world side of a run, while sampling on the serving
hosts stays nondeterministic, so a rerun reproduces the seed's world and
the cohort distributions rather than per-run numbers; every claim in
this paper reads seed-level contrasts, never single runs.